\newcommand{\abs}[1]{\left\vert #1 \right\vert}
\newcommand{\norm}[1]{\left\Vert#1\right\Vert}
\newtheorem{definition}{Definition}
\newtheorem{theorem}{Theorem}
\newtheorem{lemma}{Lemma}
\newtheorem{remark}{Remark}
\begin{document}

\title{On Convergent Finite Difference Schemes for Variational - PDE Based Image Processing\thanks{
The work was initiated at the First Central Region Conference on Numerical Analysis and Dynamical Systems (CRCNADS), University of Kansas, Lawrence, KS, USA, May 3--5, 2013. Download the poster at figshare: \url{http://dx.doi.org/10.6084/m9.figshare.695306}.}}

\author{V. B. Surya Prasath\thanks{Corresponding author. Department of Computer Science, University of Missouri-Columbia, MO 65211 USA. E-mail: prasaths@missouri.edu}\and Juan C. Moreno\thanks{IT, Department of Computer Science, University of Beira Interior, 6201--001, Covilh\~{a}, Portugal. E-mail: jmoreno@ubi.pt.}} 
\date{}
\maketitle

\begin{abstract}

We study an adaptive anisotropic Huber functional based image restoration scheme. By using a combination of L2-L1 regularization functions, an adaptive Huber functional based energy minimization model provides denoising with edge preservation in noisy digital images. We study a convergent finite difference scheme based on continuous piecewise linear functions and use a variable splitting scheme, namely the Split Bregman~\cite{GO09}, to obtain the discrete minimizer. Experimental results are given in image denoising and comparison with additive operator splitting, dual fixed point, and projected gradient schemes illustrate that the best convergence rates are obtained for our algorithm. 

\end{abstract}
\textbf{Keywords}: Image restoration, Adaptive denoising, Finite differences, Convergence, Huber functional.
\section{Introduction}\label{sec:intro}
\linenumbers

Variational and partial differential differential equations (PDEs) based schemes are popular in image and video processing problems. In particular in image restoration, adaptive edge preserving smoothing can be achieved by choosing regularizing functions or equivalently diffusion coefficients carefully. This has been the object of study for the last three decades and we mention the seminal work of  Perona  and Malik~\cite{PM90} as the starting point in PDE based image processing and the connections to variational and robust statistics has also been considered later~\cite{BR96,CG98,WE98}. We refer to the recent monographs~\cite{AK06,Scherzerbook09} for an overview of these methods.

Based on the smoothness or regularity assumptions on the true image, various regularization functions can be used. The Tikhonov regularization function~\cite{TA77} which is based on the quadratic growth, $L^2$-gradient minimization, suppresses gradients and thus is effective in removing noise. Unfortunately gradients can also represent edges which are important for further pattern recognition tasks. To avoid the over smoothing total variation or the $L^1$-gradient minimization, which is widely known as the total variation (TV) regularization model, has been advocated~\cite{RO92}. Recently, there are efforts to combine both the $L^2$ and $L^1$ based fundtionals into one common minimization problem such as the Huber function~\cite{BS98,PSd10}, inf-sup convolution~\cite{CL97,CasellasSapiro00}. Adaptive versions of the variational - PDE models are gaining popularity~\cite{Che05,CL06,PSc10,PSe12,Prasath11,PVorotnikov12} and can give better restoration results than non-adaptive schemes in terms of edge preservation. The discrete approximation to the continuous variational - PDE schemes from image processing using finite difference and finite element based schemes have been studied~\cite{Ch95,DobsonVogel97,WR98,ChanMulet99,Ch99,SpitaleriMarchmultigridANM01,JiaZhao09,WeissBF09,WuYangPangfourthfixedpointANM12}. Convergence of finite differences for various PDEs is a classic area within numerical analysis\footnote{Semen Aronovich Ger\v{s}gorin's work~\cite{Gersgorin30} in 1930 was the first paper to treat the important topic of the convergence of finite-difference approximations to the solution of Laplace-type equations.} 
and is still an active area of research in application areas such as image processing~\cite{ChambolleLevineFDsiims11,LaiROFdiscrete12,CarliniFerrettiMeancurvdiscANM12,XuGeomdiffconvANM13}. 

In this paper we consider convergent finite difference schemes for an adaptive Huber type functional based energy minimization model. We provide comparison with other convex variational regularization functions and use an edge indicator function guided regularization model. By using piecewise continuous linear functions along with the discrete energy we study the convergence of discrete minimizer to the continuous solution. To solve corresponding discrete convex optimization problem various solvers exist, such as the dual minimization~\cite{Ch04}, primal-dual~\cite{ChambollePock11} alternating direction method of multipliers and, operator splitting~\cite{Setzer11} etc. Here we use the split Bregman method studied by Goldstein and Osher~\cite{GO09,GB10} for computing the discrete energy minimizer as it is the fastest in terms of computational complexity and then prove a convergence result for the class of weakly regular images. We utilize an image adaptive inverse gradient based regularization parameter for better denoising without destroying salient edges. Experimental results on real and synthetic noisy images are given to highlight the noise removal property of the proposed model. Comparison results with different discrete optimization models in undertaken and further visualization are provided to support split Bregman based solution. 

The rest of the paper is organized as follows. Section~\ref{sec:comb} provides the background on an adaptive Huber variational - PDE model along with some basic results on bounded variation space. Section~\ref{sec:conv} details a convergent numerical scheme for the variational scheme. Section~\ref{sec:exper} provides comparative numerical results on noisy images and Section~\ref{sec:conc} concludes the paper.
\section{Continuous $L^2$-$L^1$ variational - PDE model}\label{sec:comb}

\begin{figure}
\centering
    \subfigure[Regularizers $\varphi(s)$]{\includegraphics[width = 6.75cm, height=5cm]{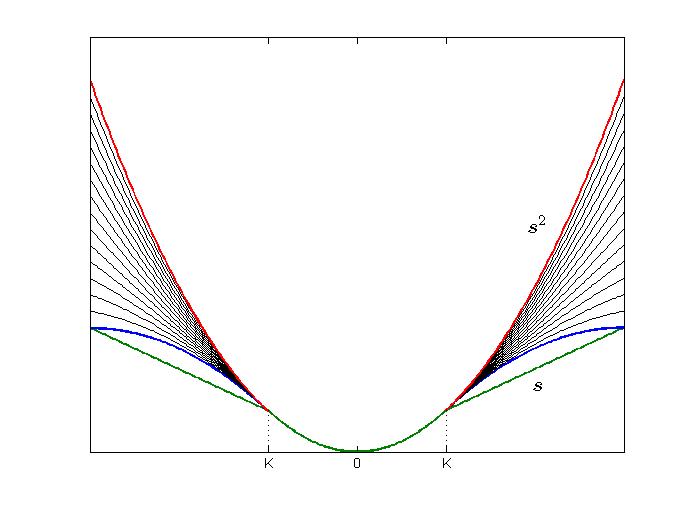}}
    \subfigure[Diffusivities $g(s)=\varphi'(s)/2s$]{\includegraphics[width = 6.75cm, height=5cm]{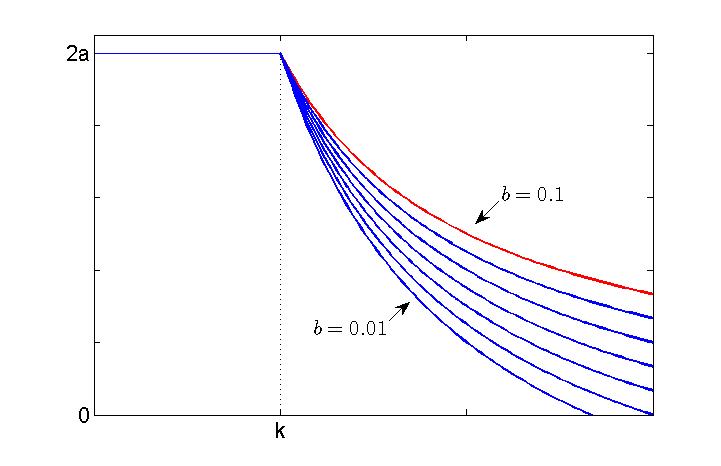}}
    \caption{Regularization and diffusion functions.
    (a) The regularization function $\varphi_{S}$~(\ref{E:ourm}) lies between the quadratic curve $s^2$ and linear $s$ when $\abs{s} > k$ depending on the parameter $0<b<1$. 
    (b) Corresponding diffusion functions $g$.}\label{I:mphis}
\end{figure}
Let $u_0:\Omega\subset\mathbb{R}^2\to\mathbb{R}$ be the input (noisy\footnote{We assume Gaussian noise, i.e., $n\sim \mathcal{N}(0,\sigma_n)$.}) image. 
We consider the following continuous variational-PDE scheme for image restoration\footnote{Note we use the notation $\nabla $ to denote the gradient and in the space of bounded variation functions $BV$ it is infact a Radon measure and is understood in the sense of distributions. The equality $\int_{\Omega} \abs{Du} = \int_{\Omega} \abs{\nabla u}\,dx$ is true when $u\in W^{1,1}(\Omega)$.},
\begin{eqnarray}\label{E:regl2l1}
\min_{u\in BV(\Omega)} E(u) = \int_\Omega \phi(\mathbf{x},\abs{\nabla u})\,d\mathbf{x} + \frac{\lambda}{2}\int_\Omega \abs{u-u_0}^2\,d\mathbf{x}
\end{eqnarray}
The corresponding PDE can be written in term of the Euler-Lagrange equation,
\begin{eqnarray}\label{E:pdephi}
\frac{\partial u}{\partial t} = div\left(\frac{\phi'(\mathbf{x},\abs{\nabla u})\nabla u}{\abs{\nabla u}}\right) - \lambda \,(u-u_0)
\end{eqnarray}
The adaptive discontinuity function $\phi(\cdot,\abs{\nabla u(x)}) = W(\cdot)\times \varphi(\abs{\nabla u(x)})$ is chosen to be an even
function. Note that the PDE in Eqn.~\eqref{E:pdephi} is a generalized Perona and Malik~\cite{PM90}
\begin{eqnarray}\label{E:pmeqn}
\frac{\partial u}{\partial t} = div\left(g(\abs{\nabla u})\nabla u\right) - \lambda \,(u-u_0),
\end{eqnarray}
where the diffusion function $g$ is related with $\varphi'(s)
= 2s g(s)$. The diffusion coefficient function $g(\cdot)$ decides
how much smoothness occurs and helps in noisy pixels (outlier) rejection. Various choices for choosing
$\varphi$ exists in the literature,
see~\cite{GG84,GM87,Li95,CS05} and~\cite{SZ08} for a recent
review. Note that under Gaussian noise assumption the data fidelity term  (also called the likelihood
term) in Eqn.~\eqref{E:regl2l1} is quadratic and hence
convex in $u$. Thus, if the regularization term is also convex in $u$ then
we are guaranteed of the well-posedness of the energy minimization
scheme given in~\eqref{E:regl2l1}. There are functions which are
non-convex~\cite{GM87,Li95,BS98,RM03} with $\varphi(s)\sim s^2$ near
$0$ and asymptotically linear as $\abs{s}\to+\infty$. This can
cause unstable behavior as the scheme can be plagued with local
minima. In this paper, we concentrate on convex regularization functions and study a stable and convergent scheme.
\begin{remark}
There are other ways to incorporate adaptive weights inside the regularization function or equivalently the diffusion coefficient. For example, as in adaptive total variation, i.e., with $\varphi(s) = s $, $\phi(x,\abs{\nabla u(x)}) =  \abs{W(x)\cdot\nabla u(x)}$ or in general $\phi(\cdot,\abs{\nabla u(x)}) = \varphi( W(\cdot)\abs{\nabla u(x)})$. The main difference lies in the way the regularization function $\varphi$ is weighted anisotropically and the final results change according to the formulation utilized. The main convergence result in Section~\ref{sec:conv} holds true for these type of adaptive functions as well.
\end{remark}

Two of the most obvious choices for the regularization function $\varphi$ are the Tikhonov or $L^2$-gradient $\varphi(s) = s^2$ and the total variation (TV) or $L^1$-gradient $\varphi(s)=s$, see Figure~\ref{I:mphis}(a). Both these functions have their advantages and drawbacks as illustrated by a synthetic noisy step image restoration example given in Fig.~\ref{fig:step}. To further highlight the smoothing properties we show in Figure~\ref{fig:oned} a line taken across the $Step$ image and corresponding results\footnote{Evolution of the $Step$ edge synthetic image mesh under different schemes are available as movies in the supplementary material.}. The Tikhonov regularization though effective in removing noise, penalizes higher gradients and hence can smooth the step edge excessively as can be seen in the resultant Fig.~\ref{fig:step}(c). On the other hand the TV regularization better preserves the edges but some additional regions in the homogeneous parts can be enhances which is known as `staircasing' artifact, see Fig.~\ref{fig:step}(d). Hence, a robust regularizer is required for effective smoothing for denoising while edges are preserved. For example, motivated from the robust statistics, we consider the classical M-estimators Huber's min-max function~\cite{HU81} and the Tukey's bisquare function~\cite{TU77} which are given by,
\begin{align}
\varphi_{H}(s)& =\begin{cases}
                s^2/2 & \text{if} \abs{s} < k,\\
                 k (\abs{s}-\dfrac{k}{2}) &\text{if}\abs{s}>k,
            \end{cases}\label{E:huber}\\
            \varphi_{T}(s)& =\begin{cases}
                \frac{k^2}{6}\left(1- [1 - s^2/k^2]^3\right) &\text{if} \abs{s} < k,\\
                 \frac{k^2}{6} &\text{if}\abs{s}>k,
            \end{cases}\label{E:tukey}
\end{align}
respectively. Note that the parameter $k>0$ determines the region of transition between low and high gradients thereby providing a separation of homogeneous (flat) regions and edges (jumps).  To study the fine properties of the Huber and Tukey regularization functions on the final restoration result, we consider a simple 1D signal which consist of a sharp peak like edge and ramp edges along with flat regions.

\begin{figure}
\centering

\includegraphics[width=3.cm, height=2.8cm]{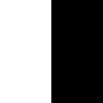}
\includegraphics[width=3.cm, height=2.8cm]{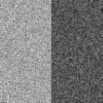}
\includegraphics[width=3.cm, height=2.8cm]{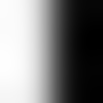}
\includegraphics[width=3.cm, height=2.8cm]{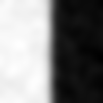}
\includegraphics[width=3.cm, height=2.8cm]{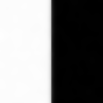}\\
	\subfigure[Original, $\sigma_n=20$]{\includegraphics[width=3.cm, height=2.8cm]{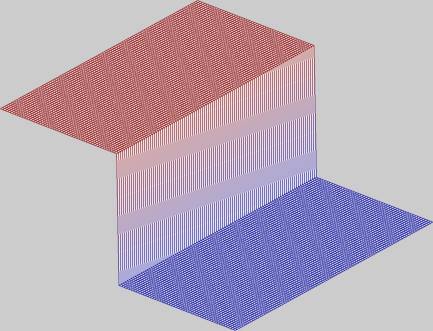}}
	\subfigure[Noisy, $\sigma_n=20$]{\includegraphics[width=3.cm, height=2.8cm]{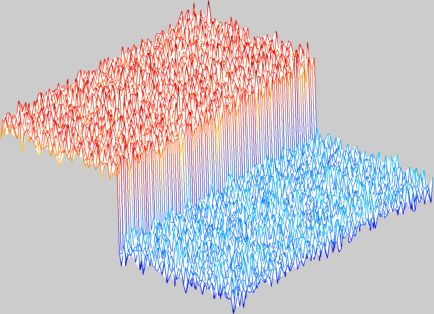}}
	\subfigure[Tikhonov]{\includegraphics[width=3.cm, height=2.8cm]{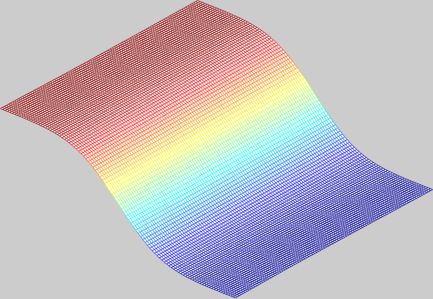}}
	\subfigure[TV]{\includegraphics[width=3.cm, height=2.8cm]{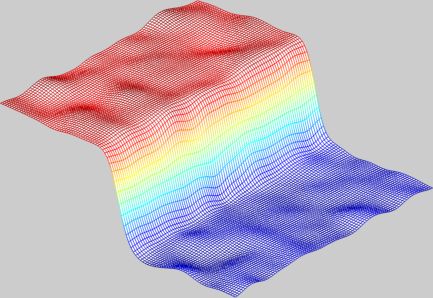}}
	\subfigure[Our]{\includegraphics[width=3.cm, height=2.8cm]{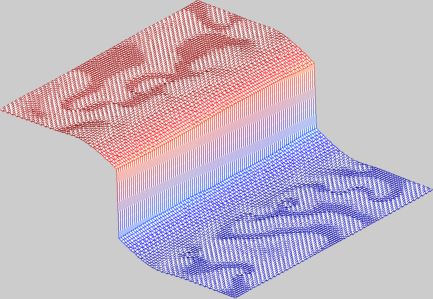}}
	\caption{Synthetic $Step$ image showing the effects of the choice of regularization function on the final restoration results. 
	The $L^2$ - gradient scheme (Tikhonov) over-smoothes the edge whereas $L^1$ - gradient scheme (TV) though edge-preserving can introduce oscillations known as staircasing in homogeneous regions. An adaptive combination via~\eqref{E:ourm} balances the smoothing along with edge preservation.}\label{fig:step}
\end{figure}
\begin{figure}
\centering
	\includegraphics[width=11cm, height=8cm]{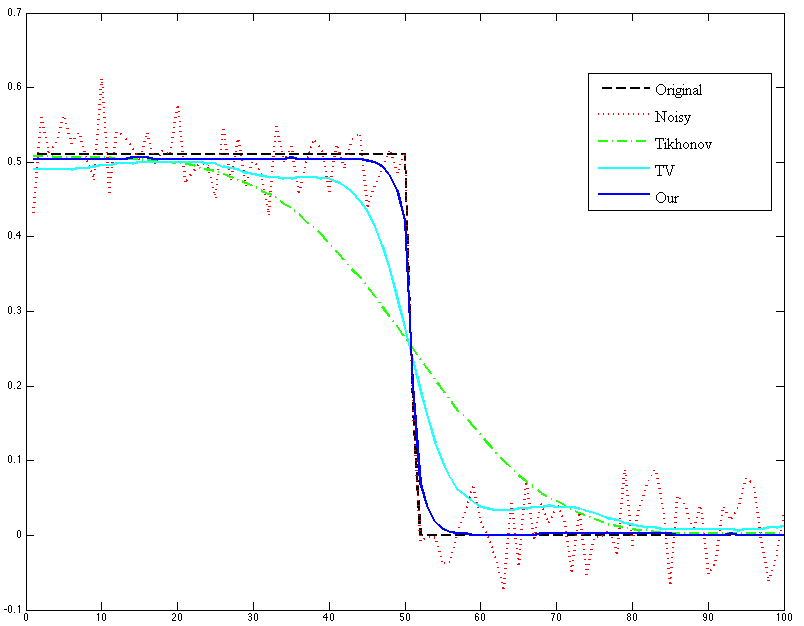}
\caption{One dimensional signal (line) taken across the middle of synthetic $Step$ image in Figure~\ref{fig:step}. The proposed adaptive scheme provides smoothing with edge preservation when compared with Tikhonov (over-smoothing) and TV (staircasing) regularization approaches.}\label{fig:oned}
\end{figure}

\begin{figure}
\centering
    \subfigure[Original \& noisy signals]{\includegraphics[width=2.75in]{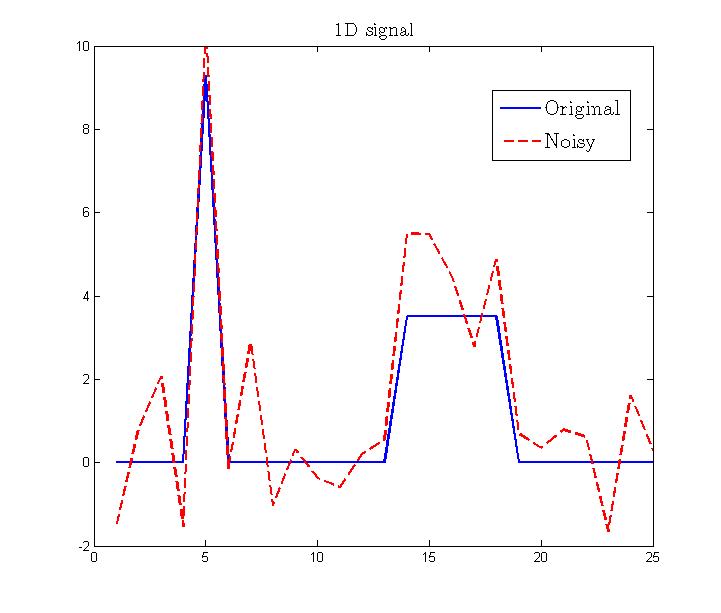}}
    \subfigure[Huber Restorations]{\includegraphics[width=2.75in]{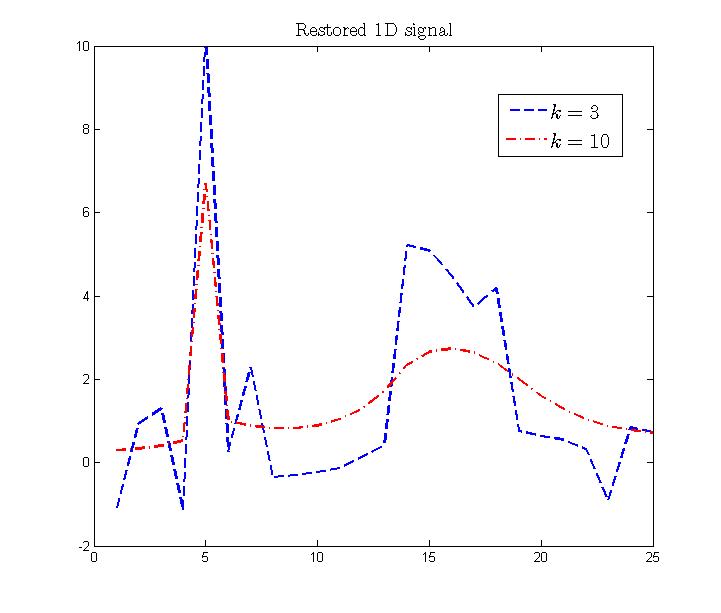}}
    \caption{Dependence on the parameter $k$ for the Huber function $\varphi_H$ give in~(\ref{E:huber}).
    (a) Original signal with an impulse edge at $5$ and a step edge in the range $[14-18]$
     and additive Gaussian noise added (unit variance)
    (b) Restoration using the variational minimization~(\ref{E:regl2l1}) with Huber $\varphi_H$
    in~\eqref{E:huber} with $k=3$ and $k=10$.}\label{I:huber}
\end{figure}
\begin{figure}
\centering
    \subfigure[Original signals]{\includegraphics[width=5.25cm]{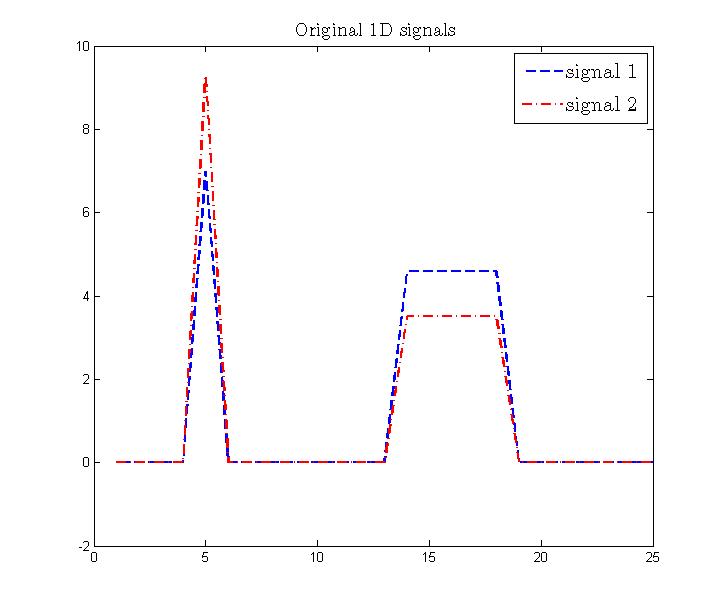}}
    \subfigure[Noisy signals]{\includegraphics[width=5.25cm]{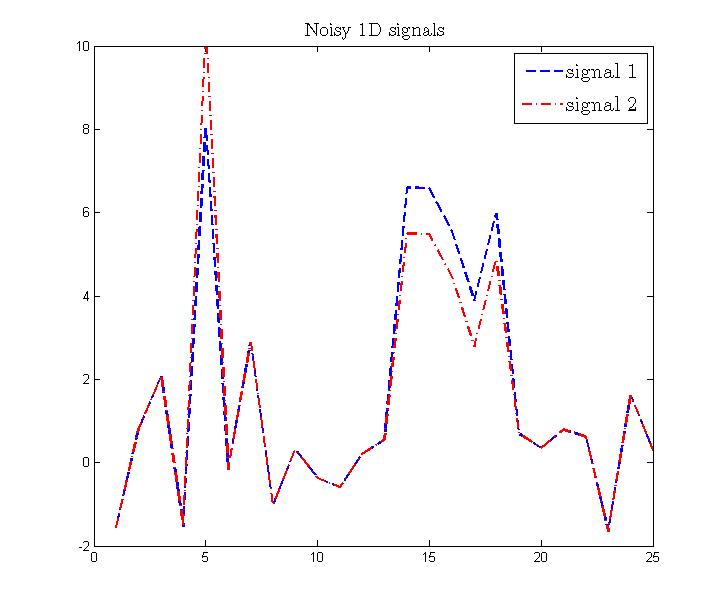}}
    \subfigure[Tukey Restorations]{\includegraphics[width=5.25cm]{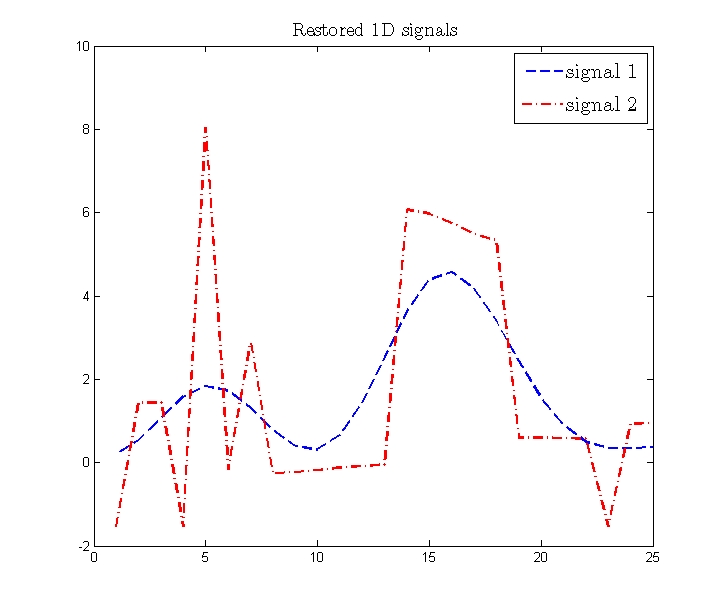}}
    \caption{Instability using Tukey function given in $\varphi_T$~(\ref{E:tukey}).
    (a) Two perturbed signals with different magnitudes 
    (b) Additive Gaussian noise added (unit variance) signals
    (c) Restoration using the minimization~\eqref{E:regl2l1} with Tukey $\varphi_T$
    in~\eqref{E:tukey}.}\label{I:tukey}
\end{figure}

\begin{itemize}
\item The Huber function $\varphi_{H}$~(\ref{E:huber}) is
convex and has a linear response to noisy pixels (outliers) and is strongly
depends on the parameter $k$ for that. Figure~\ref{I:huber} shows
how the dependence on $k$ affects the final restoration strongly on a
1-D noisy signal ($\sigma_n=1$) with two type of discontinuities given in Fig.~\ref{I:huber}(a). If $k$ is
smaller ($k=3$) much of the noise remains and there is no
smoothing, whereas if $k$ is bigger ($k=10$) then smoothing occurs
indiscriminately (Fig.~\ref{I:huber}(b)) and edges are blurred
like the quadratic regularization (equivalent to Gaussian filtering) case. From this we can conclude that
setting a small value for the threshold $k$ captures edges as well
as outliers corresponding to noise. Since we do not a priori know
when and where $\abs{\nabla u}$ jumps (edges) occur and the input image $u_0$ is
corrupted with additive noise there is a need to include an image
adaptive measurement for choosing $k$.

\item On the other hand the Tukey function $\varphi_{T}$~(\ref{E:tukey}) is
non-convex and gives constant response to outliers
(Fig.~\ref{I:mphis})(a), this can be a drawback in a scenario where
the edges and outliers have same high frequency content. To illustrate we consider the
same 1-D signal but perturb slightly to obtain another 1-D signal copy, see Fig.~\ref{I:tukey}(a). The two original signals are of same
type but of different amplitude. After adding additive Gaussian
noise of strength $\sigma_n=1$ to both signals
(Fig.~\ref{I:tukey}(b)) we use Tukey function~(\ref{E:tukey}) based 
minimization scheme~\eqref{E:regl2l1} and obtain the results Fig.~\ref{I:tukey}(c). This
shows that a even slight perturbation of the input signal can
produce a very different output due to instability associated with the non-convexity nature of the regularization function.
\end{itemize}
Motivated by the above arguments and to avoid both the over-under smoothing, and local minima issues, in this paper we use the following regularization function~\cite{PSd10},
\begin{eqnarray}\label{E:ourm}
\varphi_{S}(s)=\begin{cases}
                 a s^2  & \text{if} \abs{s} < k,\\
                 bs^2 + c\abs{s} &\text{if}\abs{s}>k,
            \end{cases}
\end{eqnarray}
where the free parameters $1\gg b>0$ is chosen so as to make the
function lie between quadratic case of Tikhonov and Huber's min-max
function, see Fig.~\ref{I:mphis}(a). This also makes the function to
be in between both $\varphi_{H}$ and $\varphi_{T}$ and strictly convex.
Thus the energy minimization of $E$ in~\eqref{E:regl2l1} is well
posed. For completeness we outline the theorem
here. We denote the the set of all bounded variation functions~\cite{GI84}
from $\Omega\to\mathbb{R}^m$ by
$BV(\Omega;\mathbb{R}^m)$ where $\Omega$ is the image
domain, usually a rectangle in $\mathbb{R}^2$.
\begin{theorem}[Well-posedness]\label{T:exist} 
Let $u_0\in BV(\Omega;\mathbb{R}^m)$ be the initial image. If the regularization function
$\varphi(\cdot)$ is strictly convex then, the energy minimization problem $E(u)$ in~\eqref{E:regl2l1} is well posed in $BV(\Omega;\mathbb{R}^m)$. Moreover, the maximum and minimum principle holds true.
\end{theorem}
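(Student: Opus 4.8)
The plan is to establish well-posedness by the classical direct method of the calculus of variations, which yields existence and uniqueness of the minimizer together with continuous dependence on the datum $u_0$, and then to obtain the maximum and minimum principle through a truncation argument. Throughout I would assume that the adaptive weight is bounded and bounded away from zero, $0<W_{\min}\le W(\mathbf{x})\le W_{\max}$, and that the strictly convex $\varphi$ has at least linear growth, $\varphi(s)\ge \alpha\abs{s}-\beta$ with $\alpha>0$; both hold for $\varphi_S$ in~\eqref{E:ourm} and for the Huber and TV regularizers.

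For existence, first note that $E(u)\ge 0$, so $E$ is bounded below and admits a minimizing sequence $\{u_n\}\subset BV(\Omega;\mathbb{R}^m)$. The fidelity term bounds $\norm{u_n}_{L^2}$, hence $\norm{u_n}_{L^1}$, while the linear lower bound on $\varphi$ together with $W\ge W_{\min}$ bounds the total variation $\abs{Du_n}(\Omega)$; thus $\{u_n\}$ is bounded in $BV(\Omega;\mathbb{R}^m)$. By the $BV$ compactness theorem a subsequence converges in $L^1$ and weak-$*$ to some $u^\ast\in BV$. The decisive step is lower semicontinuity: since $\varphi$ is convex and $W$ is lower semicontinuous and bounded, the regularization functional $u\mapsto\int_\Omega W(\mathbf{x})\,\varphi(\abs{\nabla u})\,d\mathbf{x}$ is lower semicontinuous for $L^1$ convergence by the theory of convex functions of a measure (Reshetnyak, Goffman--Serrin), and the quadratic fidelity is lower semicontinuous by Fatou's lemma along the $L^1$-convergent subsequence, so $E(u^\ast)\le\liminf_n E(u_n)$ and $u^\ast$ is a minimizer.

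Uniqueness follows from strict convexity: the fidelity $\tfrac{\lambda}{2}\int_\Omega\abs{u-u_0}^2\,d\mathbf{x}$ is strictly convex in $u$ and $\varphi$ is strictly convex by hypothesis, hence $E$ is strictly convex on its effective domain and the minimizer is unique. Stability, i.e. continuous dependence on the data, comes from the same strict convexity: writing $u_1,u_2$ for the minimizers associated with data $u_0^1,u_0^2$ and combining their optimality inequalities gives a bound $\norm{u_1-u_2}_{L^2}\le C\,\norm{u_0^1-u_0^2}_{L^2}$. Together with existence and uniqueness this establishes well-posedness of~\eqref{E:regl2l1}.

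Finally, for the maximum and minimum principle, suppose $m\le u_0\le M$ componentwise a.e.\ and let $u$ be the unique minimizer. I would introduce the truncation $\tilde u=\max(m,\min(M,u))$, i.e.\ the pointwise projection onto the box $[m,M]^m$. Because this projection is a contraction, it does not increase the gradient magnitude, so $\int_\Omega W\,\varphi(\abs{\nabla\tilde u})\le\int_\Omega W\,\varphi(\abs{\nabla u})$, and since $u_0$ already lies in the box we have $\abs{\tilde u-u_0}\le\abs{u-u_0}$ pointwise, so the fidelity term does not increase either; hence $E(\tilde u)\le E(u)$ and uniqueness forces $\tilde u=u$, giving $m\le u\le M$ a.e. The step I expect to be the main obstacle is the lower semicontinuity of the regularization term in $BV$: the integral $\int_\Omega\varphi(\abs{\nabla u})$ must be read in the relaxed sense as a convex function of the measure $Du$, and because $\varphi_S$ has quadratic growth at infinity its effective domain is in fact $W^{1,2}(\Omega)\cap BV$, so the recession behavior and the spatially varying weight $W(\mathbf{x})$ must be treated consistently; for the superlinear $\varphi_S$ one may instead run the direct method with weak $H^1$ compactness, the minimizer then lying in $W^{1,2}(\Omega)$.
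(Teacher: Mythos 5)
Your proposal is correct in substance, but it takes a genuinely different route from the paper: the paper's entire proof consists of observing that the fidelity term $(u-u_0)^2$ is strictly convex in $u$ and then deferring both well-posedness and the maximum--minimum principle to the earlier reference~\cite{PSd10}; it supplies no argument of its own. You instead give a self-contained direct-method proof (coercivity in $BV$ from the linear lower bound on $\varphi$ and $W\ge W_{\min}>0$, $BV$-compactness, Reshetnyak/Goffman--Serrin lower semicontinuity, uniqueness and $L^2$-stability from strict convexity of the fidelity, and the maximum principle by truncation onto the box $[m,M]^m$). This is the argument the paper implicitly relies on, and your version has the merit of making the hidden hypotheses explicit: that $W$ must be bounded and bounded away from zero (true for the Gaussian-smoothed edge indicator~\eqref{E:alpha}), that $\varphi$ must be nondecreasing on $[0,\infty)$ for the truncation step (automatic for an even strictly convex function minimized at $0$), and that the linear lower bound is automatic for any non-constant convex $\varphi$. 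You also correctly isolate the one genuine technical subtlety that the paper's statement glosses over: since $\varphi_{S}$ in~\eqref{E:ourm} has quadratic growth, the relaxed functional is $+\infty$ on any $u\in BV$ whose derivative has a nontrivial singular part, so the minimization "in $BV$" of~\eqref{E:regl2l1} effectively takes place in $W^{1,2}(\Omega)$ and the direct method is more naturally run with weak $H^1$ compactness. Two minor remarks: strict convexity of $E$ (hence uniqueness) already follows from the fidelity term alone, so the strict convexity of $\varphi$ is not needed for that step, only convexity for lower semicontinuity; and the stability estimate holds with constant $C=1$ since the map $u_0\mapsto u$ is the proximal operator of a convex functional and is therefore nonexpansive in $L^2$.
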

\begin{proof}
From~\eqref{E:regl2l1} the first term $(u-I)^2$ is strictly convex
in $u$. Thus if $\varphi$ is also strictly convex then the
well-posedness and maximum - minimum principle follows from~\cite{PSd10}.
\end{proof}
\begin{remark} Note that if
$b\to -1$ in~(\ref{E:ourm}) we approach the Tukey's bisquare
$\phi_{T}$ function continuously but we lose the convexity, see
Fig.~\ref{I:mphis}(a). Hence we stick to $0<b<1$ and use an
adaptive selection of the threshold parameter $k$, see Section~\ref{ssec:param}. 
\end{remark}

Further, to reduce the dependence on the
threshold $k$ we use the following adaptive edge indicator function,
\begin{eqnarray}\label{E:alpha}
	W(x) = \frac{1}{1+K\abs{G_\rho\star\nabla u_0}^2},
\end{eqnarray}
where $K>0$ and $G_\rho$ is the Gaussian kernel with width $\rho>0$, $G_{\rho}=(2\pi\sigma)^{-1}exp(-(|\mathbf{x}|^{2}/2\rho))$ and $\star$ is the convolution operation. Theorem~\ref{T:exist} guarantees that the regularization function $\phi_S$ in~\eqref{E:ourm} with the continuous variational
minimization problem~\eqref{E:regl2l1} is well-posed in
the sense of Hadamard. Note that the data fidelity or the lagrangian parameter $\lambda$  in~\eqref{E:regl2l1} can be made adaptive so that when we use an iterative scheme as in Section~\ref{sec:conv} it is made smaller as the iteration increases. This helps in reducing the regularization as the noise level decreases. An adaptive way to select $\lambda$  in the numerical simulations is given in Section~\ref{sec:exper}. As we will see in denoising examples, this makes our scheme to adjust according to the image information at the current iteration and gives better restoration results overall. If the parameter $\lambda$ is data adaptive, i.e.,
$\lambda=\lambda(u, \nabla u)$ (see Eqn.~\eqref{E:regl2l1}) then the
above theorem holds true if $\lambda\in C^{\infty}(\Omega)$  and continuous, in our case it
is true, see Eqn~\eqref{E:lambda} below.
\section{A convergent finite difference scheme}\label{sec:conv}
\subsection{Discretized functional}\label{ssec:}

The digital image has a natural rectangular grid and without loss of generality we assume that the image $u:\Omega\subset\mathbb{R}^2\to \mathbb{R}$ has size $N\times N$. Then, the domain $\bar\Omega$ is divided into $N^2$ subdomains of side length $h$. We let the vertices $\{v_{i,j}: 1\leq i,j\leq N\}$ so that the $(i,j)^{th}$ square subdomains are $\Omega_{i,j}=v_{i,j} + [-h/2,h/2]^2$. 
Then we use the following finite difference approximations for the gradients,
\begin{eqnarray}\label{E:dgrad}
\nabla^x_{+} u_{ij}= \begin{cases}
				0	&	u_{1j} =0,~~u_{Nj} = 0\\
				\frac{u_{i+1,j} - u_{ij}}{h} & i,j = 1,\ldots,N-1,
				\end{cases} \quad\quad\quad
\nabla^x_{-} u_{ij}= \begin{cases}
				0	&	u_{1j} =0,~~u_{Nj} = 0\\
				\frac{u_{ij} - u_{i-1,j}}{h} & i,j = 1,\ldots,N-1.
				\end{cases} 
\end{eqnarray}
and similarly for the $y$-direction gradients $\nabla^y_{+}$, $\nabla^y_{-}$, to obtain the forward and backward discrete gradients $\nabla_{+} = (\nabla^x_{+},\nabla^y_{+} )$, and $\nabla_{-} = (\nabla^x_{-},\nabla^y_{-} )$ respectively.
Then the discretized functional over $\mathbb{R}^{N\times N}$ is written as,
\begin{eqnarray}\label{E:discf}
E_h (u) = \sum_{1\leq i, j \leq N} \phi_h (W_{ij}(\nabla u)_{i,j})+ \frac{h^2\lambda}{2}\sum_{1\leq i,j\leq N} (u_{i,j}- (D_hu_0)_{i,j})^2
\end{eqnarray}
where $D_h$ is the discrete operator applied to the input image $u_0$.
The discrete regularizer in the above equations is,
\begin{eqnarray}\label{E:discour}
\phi_h (W_{ij}(\nabla u)_{i,j}) = \frac{W_{ij}h^2}{2}\times\begin{cases}
                 a \left(\abs{\nabla_+ u_{i,j}}^2 + \abs{\nabla_{-}u_{i,j}}^2\right)  & \text{if} \abs{s} < k,\\
                 b \left(\abs{\nabla_+ u_{i,j}}^2 + \abs{\nabla_{-}u_{i,j}}^2 \right) + c\left( \abs{\nabla_+ u_{i,j}} + \abs{\nabla_{-}u_{i,j}} \right) &\text{if}\abs{s}>k,
            \end{cases}
\end{eqnarray}
with $W_{ij}$ the discrete version of the edge indicator function~\eqref{E:alpha} using the discrete gradient and the discrete window based Gaussian function.
\subsection{Split Bregman method}\label{ssec:split}

We recall the split Bregman method to solve the discrete energy functional in Eqn.~\eqref{E:discf}. We sketch the main parts of the algorithm here and we refer
to~\cite{GB10} and~\cite{GO09} for the general treatment on Split
Bregman approach. This is a very fast scheme, faster than other
numerical schemes reported in the literature, as we will see for
example in image denoising tasks, Section~\ref{sec:exper}. An auxiliary variable
$\vec{d}\leftarrow\nabla u$ is introduced in the
model with a quadratic $L^2$ penalty function. That is to solve the TV minimization,
\begin{eqnarray}\label{E:tv}
\min_u TV(u) =  \int_{\Omega}\abs{\nabla u}\,dx,
\end{eqnarray}
we consider the
following unconstrained minimization problem,
\begin{eqnarray}\label{E:sbun}
\min_{u,\vec{d}} \left\{\vert\vec{d}\vert +
\frac{\lambda}{2}\norm{\vec{d}-\nabla u}^2_{L^{2}(\Omega)}\right\}.
\end{eqnarray}
The above problem is solved by using an alternating minimization
scheme, which includes the addition of a vector $\vec{e}$, inside
the quadratic functional. That is, the algorithm reduces to the
following sequence of unconstrained problems, 
\begin{align}
(u^{t+1},\vec{d}^{t+1}) &= arg\min_{0\leq u\leq
1,\;\vec{d}}~\vert\vec{d}\vert +
\frac{\lambda}{2}\norm{\vec{d}-\nabla u-\vec{b}^t}^2_{L^{2}(\Omega)}\label{E:sb1}\\
\vec{e}^{t+1} & = \vec{e}^t + \nabla u^t -\vec{d}^t\label{E:sb2}
\end{align}
First a minimization with respect to $u$ is performed using a
Gauss-Seidel  method.  Next a minimization with respect to
$\vec{d}$ is done using a shrinkage method. Finally, the vector $\vec{e}$
is updated using~\eqref{E:sb2}. The following steps summarize the algorithm,
\begin{enumerate}
\item Initialize $d^0,e^0\in (L^2(\Omega))^n$

\item For $t\geq 1$
    \begin{enumerate}
        \item $(\mu I - \lambda\Delta) u^{t+1} = \mu u_0 - \nabla^{T}(d^t - e^t)$
        \item Compute \[d^{k+1} = shrink\left(\nabla u^t + e^t,\frac{1}{\lambda}\right)\]
    \end{enumerate}
\item $e^{t+1} = e^t+ \nabla u^{t+1} - d^{t+1}$
\end{enumerate}
The shrinkage operation is given by,
\[shrink(x,\gamma) =
\frac{x}{\abs{x}}*\max{(\abs{x}-\gamma,0)}.\] 
It can be shown that this algorithm converges very quickly even when an approximate
solution is used in Eqn.~\eqref{E:sb1}. The split Bregman algorithm for solving our functional~\eqref{E:discf} can similarly be derived. Note that in our case the shrinkage becomes
\begin{eqnarray}\label{E:ourshrink}
d^{k+1} = shrink\left(\nabla u^t + e^t,\frac{W}{\lambda}\right),
\end{eqnarray}
where $W$ is the adaptive edge indicator function given in Eqn.~\eqref{E:alpha}.
\subsection{Convergence}\label{ssec:con}

The digital image $u\in\mathbb{R}^{N\times N}$ is interpolated using continuous piecewise linear functions on $\Omega$,
\[\mathcal{P}_hU(x) =  \sum_{1\leq i,j \leq N} U_{i,j}\ell_{i,j}(x)\]
with $\ell_{i,j}:\Omega\to\mathbb{R}$ and $\ell_{i,j}(v_{i,j}) = 1$, $\ell_{i,j}(v) = 0$, $\omega\in\{v_{i,j}\}^c$. Similarly we define piecewise constant extension $\mathcal{C}_hU(x) = U_{i,j}$ for $x\in int(\Omega_{i,j})$, and the sampling operator 
\[\mathcal{Q}_h U(x)= \frac{1}{\abs{\Omega_{i,j}}}\int_{\Omega_{i,j}} U(y)\,dy,~~\text{for}~x\in int(\Omega_{i,j}).\] 
To prove the convergence of the interpolated function to the continuous solution we first introduce some basic notations. In what follows we use the standard notations on Lebesgue $L^p(\Omega)$ ($1\leq p\leq \infty$) and functions of bounded variation $BV(\Omega)$ spaces. We define the translation of a set and and a function with vector $\tau\in\mathbb{R}^2$ as $T^\tau\Omega = \{x+\tau : x\in\Omega\}$, $T^\tau \phi(x) = \phi(x+\tau)$ for $x\in T^{-\tau}\Omega$ respectively. Let us recall the definition of $p$-modulus of continuity of order $t >0$ for a function $\phi\in L^p(\Omega)$, $\omega(\phi,t)_p = \sup_{\abs{\tau}\leq t} \norm{T^\tau\phi - \phi}_{L^p(\Omega\cap T^{-\tau}\Omega)}$. Note that the modulus of continuity gives a quantitative account of the continuity property of $L^p(\Omega)$ functions.

\begin{definition}[Weakly regular functions]
Let $\phi\in L^p(\Omega)$ and $0 < \mathcal{L} \leq 1$. We say $\phi$ is weakly regular ($\mathcal{L}$-Lipschitz) function if it satisfies the condition $\sup_{0<t<1}\frac{\omega(\phi,t)}{t^{\mathcal{L}}}<\infty$.
\end{definition}
The main convergence theorem is stated as follows.
\begin{theorem}[Convergence]\label{T:main}
Let $u_0\in L^\infty(\Omega)$, weakly regular ($\mathcal{L}$-Lipschitz, $\mathcal{L}\in (0,1]$) and $D_hU_0$ be the discretization with respect to a uniform quadrangulation $Q_h$. Let $U$ be the minimizer of the discretized functional over $\mathbb{R}^{N\times N}$,
\[E_h (u) = \sum_{1\leq i, j \leq N} \phi_h (W_{ij}(\nabla u_{i,j}))+ \frac{h^2\lambda}{2}\sum_{1\leq i,j\leq N} (u_{i,j}- (D_hu_0)_{i,j})^2\]
which is obtained using the split Bregman scheme in Section~\ref{ssec:split}, and $u$ be the minimizer of the continuous functional~\eqref{E:regl2l1}. Then,
\begin{enumerate}
\item[(i)] The interpolated solution of the discrete model converges to the continuous solution,
\[\norm{\mathcal{P}_h(U) - u}_{L^2(\Omega)}\to 0~~\text{as}~~h\to 0.\] 

\item[(ii)] $E_h(\mathcal{P}_h(U))$ converges to $E(u)$ as $h\to 0$.
\end{enumerate}
\end{theorem}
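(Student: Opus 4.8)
The plan is to recognize Theorem~\ref{T:main} as a $\Gamma$-convergence statement for the discrete functionals $E_h$ (read through the interpolation $\mathcal{P}_h$, or extended to $L^2(\Omega)$ via $v\mapsto E_h(\mathcal{Q}_h v)$) together with an equicoercivity estimate, from which the convergence of minimizers (i) and of minimum values (ii) follow by the fundamental theorem of $\Gamma$-convergence. Concretely I would establish three ingredients: (a) a uniform a priori bound on the discrete minimizers; (b) a $\Gamma$-$\liminf$ inequality; and (c) a $\Gamma$-$\limsup$ (recovery sequence) inequality, and then assemble them.

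First, for equicoercivity I would compare $E_h(U)$ with the energy of an admissible competitor obtained by sampling $u_0$ itself, giving $\sup_h E_h(U)<\infty$. Since $\varphi_S$ in~\eqref{E:ourm} grows at least linearly for $\abs{s}>k$, the bound on the $L^1$-part of the discrete gradient yields a uniform control of the total variation of the interpolant $\mathcal{P}_h(U)$, while the maximum--minimum principle from Theorem~\ref{T:exist} provides a uniform $L^\infty$ bound. Combining $\sup_h\norm{\mathcal{P}_h(U)}_{BV(\Omega)}<\infty$ with $\sup_h\norm{\mathcal{P}_h(U)}_{L^\infty(\Omega)}<\infty$, the compact embedding $BV(\Omega)\hookrightarrow\hookrightarrow L^1(\Omega)$ (with $n=2$), and the interpolation inequality $\norm{f}_{L^2}^2\le\norm{f}_{L^\infty}\norm{f}_{L^1}$, gives precompactness of $\{\mathcal{P}_h(U)\}$ in $L^2(\Omega)$, so some subsequence converges in $L^2$ to a limit $u^\ast\in BV(\Omega)$. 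For the lower bound I would show that $v_h\to v$ in $L^2(\Omega)$ implies $\liminf_{h\to 0} E_h(\mathcal{Q}_h v_h)\ge E(v)$: the fidelity term passes to the limit by dominated convergence once $(D_hu_0)_{i,j}\to u_0$ in $L^2$, the sampling error being controlled by the weak regularity through $\omega(u_0,t)_2\le C t^{\mathcal{L}}$; for the regularizer I would rewrite $\sum_{i,j}\phi_h$ as $\int_\Omega \widetilde W_h(x)\,\varphi_S(\abs{\nabla \mathcal{P}_h(v_h)})\,dx$ up to a vanishing consistency error (with $\widetilde W_h$ the piecewise-constant extension of $W_{ij}$), and conclude by the $L^1$-lower semicontinuity of the convex weighted functional $v\mapsto\int_\Omega W\varphi_S(\abs{\nabla v})$ on $BV(\Omega)$, using that $\widetilde W_h\to W$ uniformly since the indicator~\eqref{E:alpha} depends on the smoothed datum $G_\rho\star\nabla u_0$.

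For the upper bound I would build a recovery sequence for the continuous minimizer $u$: mollify $u$ to $u_\varepsilon\in C^\infty$, set $v_h=\mathcal{Q}_h u_\varepsilon$, and estimate $E_h(v_h)$ by Taylor expansion so that the difference quotients converge to $\nabla u_\varepsilon$ and the Riemann sums converge to the integrals, giving $\limsup_{h\to0}E_h(v_h)\le E(u_\varepsilon)$; a diagonal argument in $\varepsilon\to 0$, together with continuity of $E$ along the mollification, yields $\limsup E_h\le E(u)$. Assembling (a)--(c): equicoercivity and the $\liminf$ inequality force the $L^2$-limit $u^\ast$ of a subsequence of minimizers to satisfy $E(u^\ast)\le\liminf E_h(U)\le\limsup E_h(U)\le E(u)$, the last step using minimality of $U$ against the recovery sequence; hence $u^\ast$ minimizes $E$. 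Strict convexity of $\varphi_S$ (Theorem~\ref{T:exist}) makes the continuous minimizer unique, so $u^\ast=u$ and the \emph{whole} sequence converges, proving (i); the chain of inequalities then collapses to equalities, giving $E_h(\mathcal{P}_h(U))\to E(u)$ and hence (ii).

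I expect the main obstacle to be the $\liminf$ inequality for the regularizer. The discrete minimizers are only controlled in $BV$, so their gradients are measures that may concentrate on edges, and one must pass to the limit in a functional mixing $L^2$ (quadratic) and $L^1$ (total-variation) growth while carrying the space-dependent anisotropic weight $W$. The delicate points are matching the forward/backward difference quotients in~\eqref{E:discour} to the piecewise-constant gradient of the triangular nodal interpolant without losing a constant in the limit, and justifying the weighted lower semicontinuity with $\widetilde W_h\to W$; making the quadrature and consistency errors quantitative in terms of $\omega(u_0,t)_2\le Ct^{\mathcal{L}}$ is precisely what renders the weak-regularity hypothesis indispensable.
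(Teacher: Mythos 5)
Your proposal is mathematically viable, but it takes a genuinely different route from the paper. You frame the theorem as a $\Gamma$-convergence result: equicoercivity (uniform $BV$ and $L^\infty$ bounds on $\mathcal{P}_h(U)$, hence $L^2$-precompactness), a $\Gamma$-$\liminf$ inequality via lower semicontinuity of the weighted convex gradient functional on $BV$, a recovery sequence built by mollification and sampling, and finally strict convexity to upgrade subsequential convergence to convergence of the whole sequence. The paper never invokes compactness or lower semicontinuity at all: its key tool is the strong convexity of the fidelity term, packaged in Lemma~\ref{L:bound}(1) as the $L^2$-subdifferentiability estimate $\norm{\tilde u - u^*}_2^2 \le \tfrac{2}{\lambda}\abs{E(\tilde u)-E(u^*)}$, which converts the energy gap $E(\mathcal{P}_h U)-E(u)$ directly into the desired $L^2$ distance. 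That gap is then telescoped into a discrete-to-continuous consistency error at the discrete minimizer (Lemma~\ref{L:bound}(2), controlled by $\omega(u_0,h)_2$) plus a comparison of the discrete minimum with the energy of a mollified continuous minimizer $u_\epsilon$ (Lemma~\ref{L:conv}, error $\mathcal{O}(h/\epsilon^2)$); balancing $\epsilon=h^{1/(2\mathcal{L}+1)}$ gives the explicit rate $h^{\mathcal{L}/(\mathcal{L}+1)}$. What each approach buys: the paper's argument is shorter, sidesteps exactly the two steps you flag as delicate (matching the forward/backward difference quotients to the interpolant's gradient in the $\liminf$, and weighted lower semicontinuity on $BV$), and yields a quantitative rate in which the weak-regularity hypothesis enters transparently; your argument is qualitative and technically heavier, but more robust --- it would survive without strong convexity of the fidelity term, it does not really need weak regularity for the qualitative conclusion, and it identifies the limit by soft arguments. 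The one ingredient the two proofs share is the recovery step: mollify the continuous minimizer, sample it, and use $E(u_\epsilon)\to E(u)$ as $\epsilon\to 0$.
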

We derive some preliminary results required for proving the main theorem. We use a generic constant $C$ which can change in line to line.
\begin{lemma}[Bounds on solutions]\label{L:bound}

\begin{itemize}
\item[(1)] Continuous:
Let $\tilde u\in BV(\Omega;\mathbb{R}^m)$ be a solution of the energy minimization~\eqref{E:regl2l1} with the adaptive regularization function~\eqref{E:ourm}. If  $u^*\in BV(\Omega;\mathbb{R}^m)$, then
\begin{eqnarray}\label{E:solns}
\norm{\tilde u - u^*}_2^2 \leq \frac{2}{\lambda} \abs{E(\tilde u) - E(u^*)}
\end{eqnarray}
\item[(2)] Discrete: Let $\tilde U\in \mathbb{R}^{N\times N}$ be the minimizer of the discretized functional $E_h$ in \eqref{E:discf}. Then
\begin{eqnarray}\label{E:dsolns}
E(\mathcal{P}_h(\tilde U)) - E_h(\tilde U) \leq  \frac{\lambda}{2} C\omega(u_0,h)_2 [C\omega(u_0,h)_2 + 8\norm{u_0}_2]
\end{eqnarray}
\end{itemize}
\end{lemma}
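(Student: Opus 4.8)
The plan is to exploit the strong convexity that the quadratic fidelity term injects into $E$. Write $E(u) = R(u) + \frac{\lambda}{2}\norm{u - u_0}_2^2$ with $R(u) = \int_\Omega \phi(\mathbf{x},\abs{\nabla u})\,d\mathbf{x}$, which is convex since $\varphi_S$ is (strictly) convex under the hypothesis of Theorem~\ref{T:exist}. Because $\tilde u$ minimizes $E$, the first-order (Euler--Lagrange) optimality condition reads $-\lambda(\tilde u - u_0)\in\partial R(\tilde u)$. I would feed this into the subgradient inequality for the convex $R$, namely $R(u^*) - R(\tilde u)\geq \langle -\lambda(\tilde u - u_0),\,u^* - \tilde u\rangle$, and expand the fidelity difference with the elementary identity $\norm{u^* - u_0}_2^2 - \norm{\tilde u - u_0}_2^2 = \norm{u^* - \tilde u}_2^2 + 2\langle \tilde u - u_0,\,u^* - \tilde u\rangle$. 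Adding the two relations cancels the inner-product terms and leaves $E(u^*) - E(\tilde u)\geq \frac{\lambda}{2}\norm{u^* - \tilde u}_2^2$. Since $\tilde u$ is a minimizer, $E(u^*) - E(\tilde u) = \abs{E(\tilde u) - E(u^*)}\geq 0$, and rearranging yields \eqref{E:solns}. No genuine obstacle arises here; the only point to state carefully is that the optimality/subgradient argument is legitimate in $BV(\Omega;\mathbb{R}^m)$ for the convex functional $R$.

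\textbf{Part (2): splitting the defect.} For the discrete bound I would decompose $E(\mathcal{P}_h(\tilde U)) - E_h(\tilde U)$ into a regularization part and a fidelity part. The structural fact driving the regularization part is that the gradient of the piecewise linear interpolant $\mathcal{P}_h(\tilde U)$ is constant on each element and reproduces exactly the finite differences appearing in \eqref{E:discour}; the discrete regularizer $\phi_h$ is then precisely the cell-wise quadrature of $\int\phi(\mathbf{x},\abs{\nabla(\cdot)})$ evaluated against that constant gradient. Consequently the two regularization contributions cancel (or the continuous one is dominated by the discrete one), so this part contributes nothing to the upper bound — which is consistent with the right-hand side of \eqref{E:dsolns} containing no gradient norms. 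The entire estimate must therefore come from the fidelity terms.

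\textbf{Part (2): the fidelity estimate and the main obstacle.} Here I would rewrite the discrete sum as an $L^2$ quantity, $h^2\sum_{i,j}(\tilde U_{i,j} - (D_hu_0)_{i,j})^2 = \norm{\mathcal{C}_h\tilde U - \mathcal{Q}_hu_0}_{L^2(\Omega)}^2$ (using $D_h = \mathcal{Q}_h$ and $\abs{\Omega_{i,j}} = h^2$), and compare it with $\norm{\mathcal{P}_h\tilde U - u_0}_{L^2(\Omega)}^2$. Setting $a = \mathcal{P}_h\tilde U - u_0$ and $b = \mathcal{C}_h\tilde U - \mathcal{Q}_hu_0$, the difference-of-squares identity $\norm{a}_2^2 - \norm{b}_2^2 = \langle a - b,\,a + b\rangle$ with Cauchy--Schwarz bounds the defect by $\frac{\lambda}{2}\norm{a - b}_2(\norm{a}_2 + \norm{b}_2)$, which already matches the product structure of \eqref{E:dsolns}. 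Two estimates then remain: the factor $\norm{a}_2 + \norm{b}_2\leq 8\norm{u_0}_2$, which I would get from the maximum--minimum principle of Theorem~\ref{T:exist} (so that $\norm{\mathcal{P}_h\tilde U}_2$ and $\norm{\mathcal{C}_h\tilde U}_2$ are dominated by $\norm{u_0}_2$) plus the triangle inequality, with the constant left as generous bookkeeping; and the delicate factor $\norm{a - b}_2\leq C\,\omega(u_0,h)_2$. The latter is the crux. Decomposing $a - b = (\mathcal{P}_h\tilde U - \mathcal{C}_h\tilde U) + (\mathcal{Q}_hu_0 - u_0)$, the second term is handled by the standard approximation fact that the cell-averaging error $\norm{u_0 - \mathcal{Q}_hu_0}_2$ is controlled by the $2$-modulus of continuity at scale $h$; but the first term is the mismatch between the piecewise linear and piecewise constant interpolants of the same nodal data, and controlling its nodal-interpolation gap at the same order $\omega(u_0,h)_2$ — rather than losing a factor of the discrete gradient — is the hard part. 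I expect to resolve it by transferring the weak ($\mathcal{L}$-Lipschitz) regularity of $u_0$ to $\tilde U$ through the discretization $D_h$, so that the oscillation of $\tilde U$ across a cell is itself bounded by $\omega(u_0,h)_2$; the collapse of the numerical constants to the stated $C$ and $8\norm{u_0}_2$ is then routine.
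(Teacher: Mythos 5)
Your Part (1) is correct and is essentially the paper's argument made explicit: the quadratic fidelity term makes $E$ $\lambda$-strongly convex, and combining the subgradient inequality for the convex regularizer at the minimizer with the polarization identity for the fidelity term gives $E(u^*)-E(\tilde u)\geq\frac{\lambda}{2}\norm{u^*-\tilde u}_2^2$, which is exactly what the paper compresses into the phrase ``$E$ is $L^2$-subdifferentiable.''

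In Part (2) you follow the same skeleton as the paper (drop the regularization contribution, apply the difference-of-squares identity with Cauchy--Schwarz to the fidelity terms), but you diverge at the decisive step and the divergence opens a genuine gap. You compare the continuous fidelity $\norm{\mathcal{P}_h\tilde U-u_0}_2^2$ against the piecewise-\emph{constant} representation $\norm{\mathcal{C}_h\tilde U-\mathcal{Q}_hu_0}_2^2$ of the discrete sum, which forces you to estimate $\norm{\mathcal{P}_h\tilde U-\mathcal{C}_h\tilde U}_2$ --- a quantity controlled by the cell-wise oscillation of the \emph{unknown minimizer} $\tilde U$, not of the data. You acknowledge this is the crux and propose to ``transfer'' the $\mathcal{L}$-Lipschitz regularity of $u_0$ to $\tilde U$, but no such transfer is available: $\tilde U$ is the minimizer of $E_h$, not a sampling of $u_0$, and the only a priori information is the energy bound $E_h(\tilde U)\leq E_h(D_hu_0)$, which yields an $L^1$-type bound on the discrete gradient and hence at best $\norm{\mathcal{P}_h\tilde U-\mathcal{C}_h\tilde U}_2=\mathcal{O}(h^{1/2})$ (via interpolation with the $L^\infty$ bound from the maximum principle). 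That is weaker than the required $C\,\omega(u_0,h)_2=\mathcal{O}(h^{\mathcal{L}})$ whenever $\mathcal{L}>1/2$, so the claimed rate in \eqref{E:dsolns} does not follow from your route as sketched. The paper avoids the term entirely by measuring \emph{both} fidelity expressions through the same interpolation operator: it compares $\norm{\mathcal{P}_h\tilde U-u_0}_2^2$ with $\norm{\mathcal{P}_h(\tilde U-\mathcal{Q}_hu_0)}_2^2$, so that the only mismatch is $\mathcal{P}_h\mathcal{Q}_hu_0-u_0$, a pure interpolation error of the \emph{data} $u_0$, which is bounded by $C\,\omega(u_0,h)_2$ directly and needs no regularity of $\tilde U$ (the factor $8\norm{u_0}_2$ then comes from the maximum principle exactly as in your bookkeeping). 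To repair your proof, replace the piecewise-constant comparison by this one, at the cost of absorbing the $h$-uniform equivalence between the lumped quadrature $h^2\sum_{i,j}c_{i,j}^2$ and $\norm{\mathcal{P}_hc}_2^2$ into the constant $C$.
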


\begin{proof}
(1) The inequality follows from the fact that for the adaptive regularization~\eqref{E:ourm} based energy minimization functional $E$in Eqn.~\eqref{E:regl2l1} is $L^2$-subdifferentiable. 

(2) We first note that 
\begin{align*}
\norm{\mathcal{P}_h\mathcal{Q}_hu_0 - u_0}_2
&\leq C\omega(u_0,h)_2\quad\text{and} \quad \norm{\mathcal{P}_h(\tilde U - \mathcal{Q}_hu_0)}^2_2\leq 4\norm{u_0}_2.
\end{align*}
Then the inequality~\eqref{E:dsolns} follows from,
\begin{align*}
\frac{2}{\lambda} (E(\mathcal{P}_h(\tilde U)) - E_h(\tilde U)) \leq \norm{\mathcal{P}_h\mathcal{Q}_hu_0 - u_0}_2 \left\{\norm{\mathcal{P}_h\mathcal{Q}_hu_0 - u_0}_2 + 2 \norm{\mathcal{P}_h(\tilde U - \mathcal{Q}_hu_0)}^2_2\right\}.
\end{align*}
\end{proof}
\begin{lemma}[Convolution bound]\label{L:conv}
Let $\tilde U\in \mathbb{R}^{N\times N}$ be the minimizer of the discretized functional $E_h$ in \eqref{E:discf}. Let $u_\epsilon = G_\epsilon\star u$ be the mollified extension of the image function $u\in BV(\Omega)$ to $u\in BV(\mathbb{R}^2)$. Then
\begin{eqnarray*}
E_h(\tilde U) - E(u_\epsilon) \leq C\norm{u_0}_\infty^2+ \mathcal{O}(h/\epsilon^2).
\end{eqnarray*}
\end{lemma}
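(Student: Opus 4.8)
The plan is to exploit the minimality of the discrete solution $\tilde U$ and reduce the claim to a finite-difference consistency estimate for the smooth competitor $u_\epsilon$. Since $\tilde U$ minimizes $E_h$ over $\mathbb{R}^{N\times N}$, we have $E_h(\tilde U)\le E_h(V)$ for every grid function $V$; I would take $V$ to be the cell-averaged sampling of the mollified function, $V_{ij}=(\mathcal{Q}_h u_\epsilon)_{ij}$. It then suffices to bound $E_h(\mathcal{Q}_h u_\epsilon)-E(u_\epsilon)$, which I would split into a regularization discrepancy and a fidelity discrepancy, comparing the two sums defining $E_h$ against the two integrals defining $E$ separately.

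For the regularization part I would compare, cell by cell, the discrete density $\phi_h(W_{ij}(\nabla u_\epsilon)_{ij})$ against $W(x)\varphi_S(|\nabla u_\epsilon(x)|)$. A Taylor expansion gives the pointwise bound $|\nabla_{\pm}(\mathcal{Q}_h u_\epsilon)_{ij}-\nabla u_\epsilon(x)|\le C\,h\,|D^2 u_\epsilon(\xi)|$ on each cell. The quantitative input is the mollifier estimate obtained by transferring the derivative onto the measure $Du$: $\|\nabla u_\epsilon\|_\infty\le\|\nabla G_\epsilon\|_1\|u\|_\infty\le C\|u_0\|_\infty/\epsilon$ and $\|D^2 u_\epsilon\|_{L^1}\le\|\nabla G_\epsilon\|_1\,|Du|(\mathbb{R}^2)\le C\,TV(u)/\epsilon$, where $\|u\|_\infty\le\|u_0\|_\infty$ by the maximum principle of Theorem~\ref{T:exist}. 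For the quadratic branch, writing $|\nabla_+ V|^2-|\nabla u_\epsilon|^2=(|\nabla_+ V|-|\nabla u_\epsilon|)(|\nabla_+ V|+|\nabla u_\epsilon|)$, the summed error is controlled by $C\,h\int_\Omega|\nabla u_\epsilon|\,|D^2 u_\epsilon|\,dx\le C\,h\,\|\nabla u_\epsilon\|_\infty\|D^2 u_\epsilon\|_{L^1}\le C\,h/\epsilon^2$; the linear branch is of order $h/\epsilon$, so the regularization discrepancy is $\mathcal{O}(h/\epsilon^2)$.

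For the fidelity part, the discrete sum $\tfrac{h^2\lambda}{2}\sum(\mathcal{Q}_h u_\epsilon-D_h u_0)^2$ is a Riemann sum for $\tfrac{\lambda}{2}\int_\Omega|u_\epsilon-u_0|^2$; since both $u_\epsilon$ and $u_0$ are bounded by $\|u_0\|_\infty$ (again by the maximum principle and $\|G_\epsilon\|_1=1$), the integrand is dominated by $4\|u_0\|_\infty^2$ and the entire fidelity contribution is absorbed into the $C\|u_0\|_\infty^2$ term. Collecting the two estimates yields $E_h(\tilde U)\le E_h(\mathcal{Q}_h u_\epsilon)\le E(u_\epsilon)+C\|u_0\|_\infty^2+\mathcal{O}(h/\epsilon^2)$, which is the claim; this uniform-in-$h$ bound is exactly what feeds the compactness step for $\mathcal{P}_h(\tilde U)$ in Theorem~\ref{T:main}.

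The hard part will be the regularization consistency, for two reasons. First, $\varphi_S$ is only piecewise $C^\infty$, with a gradient kink where $|\nabla u_\epsilon|$ crosses the threshold $k$, so the cellwise comparison cannot rely on a uniform second-order Taylor estimate of the composed density; instead I would use that $\varphi_S$ is globally Lipschitz with controlled quadratic-to-linear growth, so that the error is still dominated by the first-order gradient consistency $|\nabla_{\pm}V-\nabla u_\epsilon|$ uniformly across the transition, and that the variable weight $W$ enters only as a bounded multiplier. Second, one must track the $\epsilon$-dependence correctly: a naive $L^\infty$ bound on $D^2 u_\epsilon$ would give only the weaker $\mathcal{O}(h/\epsilon^3)$, and it is precisely the $L^1$ measurement of $D^2 u_\epsilon$ (moving the derivative onto $Du$), together with the handling of the boundary layer of width $\epsilon$ created by extending $u$ to $\mathbb{R}^2$, that recovers the stated $\mathcal{O}(h/\epsilon^2)$.
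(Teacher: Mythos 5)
Your proposal is correct and follows essentially the same route as the paper's proof: minimality of $\tilde U$ against a discretization of the mollified competitor $u_\epsilon$, a gradient-consistency estimate whose $\mathcal{O}(h/\epsilon^2)$ rate comes from second derivatives of $u_\epsilon$ controlled via the mollifier, and absorption of the fidelity discrepancy into the $C\norm{u_0}_\infty^2$ term. The only difference is technical: the paper routes the consistency through the interpolation bound $\norm{\mathcal{P}_h u_\epsilon - u_\epsilon}_{W^{1,2}} \leq C h \sum_{\abs{\alpha}=2}\norm{D^\alpha u_\epsilon}_2 \leq Ch/\epsilon^2$ (an $L^2$--$L^2$ pairing), whereas you use a cellwise Taylor bound paired as $\norm{\nabla u_\epsilon}_\infty\norm{D^2 u_\epsilon}_{L^1}$; both yield the same rate, and your version is in fact more explicit about the piecewise branches of $\varphi_S$ and the weight $W$, which the paper's proof suppresses by writing only the quadratic term $\int_\Omega\abs{\nabla\mathcal{P}_h u_\epsilon}^2\,dx$.
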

\begin{proof}
First note that
\begin{align*}
E_h(\tilde U) &\leq E_h(u_\epsilon)\\
& \leq \int_{\Omega} \abs{\nabla \mathcal{P}_hu_\epsilon}^2\,dx + \frac{\lambda}{2} \sum_{1\leq i, j \leq N} h^2\abs{(u_\epsilon)_{i,j} - (\mathcal{Q}_hu_0)_{i,j}}^2
\end{align*}
and
\[\norm{\mathcal{P}_hu_\epsilon - u_\epsilon}_{W^{1,2}} \leq C h \sum_{\abs{\alpha} =2} \norm{D^\alpha u_\epsilon}_2 \leq Ch/\epsilon^2\]
Then the inequality follows from,
\[\sum_{1\leq i, j \leq N} h^2\abs{\mathcal{Q}_h(u\epsilon - u_0)_{i,j}}^2 \leq \norm{u_\epsilon - u_0}^2_2 + C\norm{u_0}^2_\infty.\]
and
\[\sum_{1\leq i, j \leq N} h^2\abs{(u_\epsilon)_{i,j} - (\mathcal{Q}_hu_0)_{i,j}}^2 \leq \sum_{1\leq i, j \leq N} h^2 \abs{(\mathcal{Q}_hu_\epsilon - \mathcal{Q}_hu_0)_{i,j}}^2 + C\mathcal{O}(h/\epsilon^2).\]
\end{proof}
\noindent\textbf{Proof of Theorem 2:}\\
Let $\epsilon>0$  and $h \leq 1$. From Eqn.~\eqref{E:solns},
\begin{align*}
\norm{\mathcal{P}_h(U) - u}^2_2 &\leq \frac{2}{\lambda} \{E(\mathcal{P}_hU) - E(u)\}\\
&\leq \frac{2}{\lambda} \{(E(\mathcal{P}_hU) - E_h(U)) + (E_h(U) - E(u))\}
\end{align*}
Using Lemma~\ref{L:bound} and Lemma~\ref{L:conv} respectively for the two difference terms we obtain,
\begin{eqnarray}
\norm{\mathcal{P}_h(U) - u}^2_2 \leq \omega(u_0,h)_2 \{\omega(u_0,h)_2+ C\norm{u_0}_2\}  + \frac{32h}{\lambda}\norm{u_0}_\infty^2 + \frac{2Ch}{\lambda\epsilon^2} + \frac{2}{\lambda} \{E(u_\epsilon) - E(u)\}.
\end{eqnarray}
Let $\epsilon=h^{1/(2\mathcal{L}+1)}$ and since $u_0$ is weakly regular $\omega(u_0,h)_2\leq \mathcal{O}(h^\mathcal{L})$, the above inequality becomes
\[\norm{\mathcal{P}_h(U) - u}^2_2 \leq \frac{2}{\lambda} \{E(u_\epsilon) - E(u)\} + C h^{\mathcal{L}/(\mathcal{L}+1)}\]
Since $E(u_\epsilon) - E(u)\to 0$ as $\epsilon\to 0$ we have the result.
\hfill$\qed$
\section{Experimental results and discussion}\label{sec:exper}

\subsection{Parameters}\label{ssec:param}

\begin{figure}
\centering
	\subfigure[Noisefree]{\includegraphics[width=3.75cm]{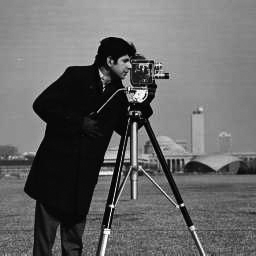}}
	\subfigure[Noisy, $\sigma_n=20$]{\includegraphics[width=3.75cm]{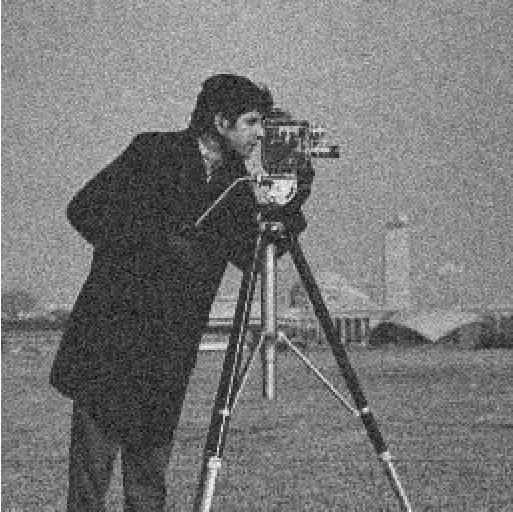}}
	\subfigure[$1-\abs{\nabla u_0}$]{\includegraphics[width=3.75cm]{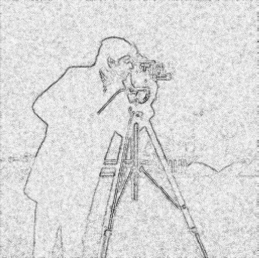}}
	\subfigure[$\lambda^{100}$]{\includegraphics[width=3.75cm]{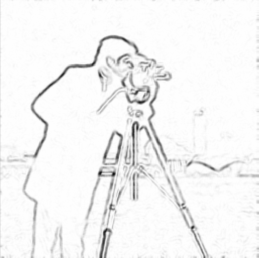}}
	\caption{Original $Cameraman$ gray scale test image of size $256 \times 256$ used in our experiments and its edge map computed with gradients.
	(a) Noise free image
	(b) Gaussian noise added image, $\sigma_n=20$
	(c) Gradient image, inverted ($1-\abs{\nabla u_0}$) for better visualization
	(d) Adaptive $\lambda$ from Eqn.~\eqref{E:lambda} at iteration $t=100$. Notice that the edges are preserved whereas the noise is removed in homogeneous regions as the iterations are increased.}\label{fig:camoriginal}
\end{figure}
We set the step size $h = \delta t = 0.20$, $a=1$,
and parameters in our regularization function in~\eqref{E:ourm} to $b = 0.05$, $\rho=2$, and the thresholding parameter $k$ is determined using the mean absolute deviation (MAD) from robust statistics~\cite{RE83},
\begin{align*}
k & = 1.4826\, \times \text{MAD}(\nabla u)\\
  & = 1.4826\,\times \text{median}_u [\abs{\nabla u - \text{median}(\abs{\nabla u})}]
\end{align*}
where the constant is derived from the fact that the MAD of a
zero-mean normal distribution with unit variance is
$0.6745=1/1.4826$. For the discrete functional~\eqref{E:discf}, the parameter $k$ is
computed using the gradient magnitude $\abs{\nabla u}$ for which we used the same finite difference approximations introduced before, see Eqns.~\eqref{E:dgrad}.  All the test images are normalized to the range $[0, 1]$.

We further introduce an iteration and pixel adaptive
$\lambda^{(t)}_{i,j}$ using the gradient information at iteration
$(t-1)$ via
\begin{eqnarray}\label{E:lambda}
\lambda^{(t)}_{i,j}: = \frac{1}{\epsilon^2+\sqrt{(u^{(t-1)}_{i+1,j} -
u^{(t-1)}_{i,j})^2 + (u^{(t-1)}_{i,j+1} -
u^{(t-1)}_{i,j})^2 }}
\end{eqnarray}
where $\epsilon^2 =10^{-6}$ is added to avoid numerical
instabilities. Note that $\lambda^{(t)}_{i,j}\in [0,1]$ reduces the influence
of the regularization term at edges and makes the scheme an image adaptive method.
This also reduces the dependence on the threshold $k$ to decide
upon the outliers part (compare this with Huber's minmax function~\eqref{E:huber}
and Fig.~\ref{I:huber}). Since Theorem~\ref{T:exist} implies
stability we are guaranteed of a good reconstruction even if the
input is perturbed significantly (compare this with Tukey bisquare
function~\eqref{E:tukey} and Fig.~\ref{I:tukey}).  Fig.~\ref{I:oned} we consider the same 1-D signal shown earlier in Fig.~\ref{I:huber}(a). The restoration result exhibits strong smoothing property of our adaptive regularization function with edge preservation. Exact locations of the true discontinuities are preserved and noise is completely removed in homogenous regions. 

Figure~\ref{fig:camoriginal} show the $Cameraman$ gray-scale $256\times 256$ size image used in our later comparison results. We add Gaussian white noise of standard deviation $\sigma_n=20$ and mean zero\footnote{Using MATLAB command \texttt{imnoise($u_0$,'gaussian',0,$\sigma_n$)}.}.  Figure~\ref{fig:camoriginal} (b) \& (c) shows the gradient image (Computed using the formulae~\eqref{E:dgrad}) from the initial noisy image $\abs{\nabla u_0}$ and adaptive $\lambda$ parameter computed using Eqn.~\eqref{E:lambda} at iteration $100$ showing the improvement in the edge map.

\subsection{Restoration results}\label{ssec:rest}
\begin{figure}
\centering
    \includegraphics[width=3.5in]{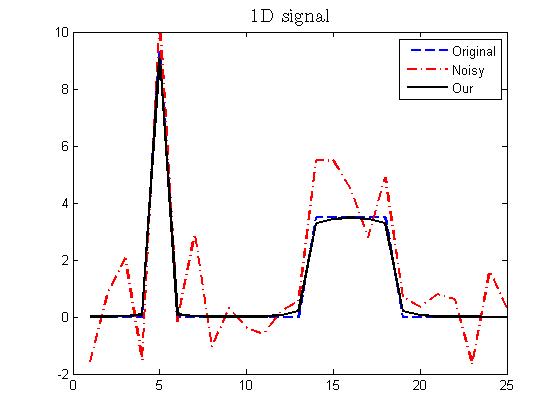}
    \caption{Restoration of a 1-D signal by our scheme: ($---$) Original signal ($-\cdot-$) noisy signal and
    (-----) dark line is the restored curve using our adaptive minimization scheme~\eqref{E:regl2l1} with function~\eqref{E:ourm}. Compare this with the corresponding results for Huber and Tukey functions in Figure~\ref{I:huber}(b) and Figure~\ref{I:tukey}(c) respectively.}\label{I:oned}
\end{figure}
\begin{figure}
\centering
    \subfigure[Color $Movie$ scene and $Kid$,  $Goat$ gray scale photo]
     {\includegraphics[width=6cm, height=4cm]{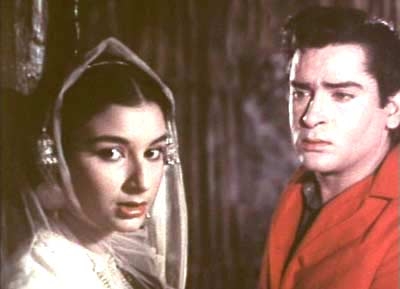} \includegraphics[width=3.75cm,  height=4cm]{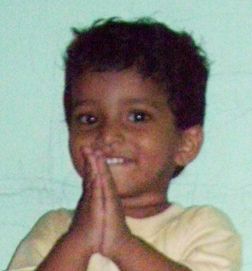}\includegraphics[width=5cm,  height=4cm]{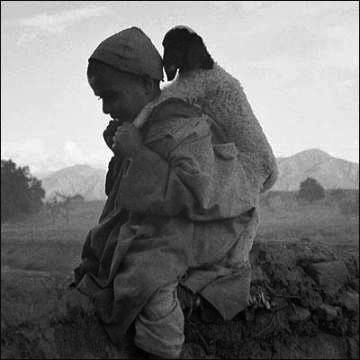}}\\
     \subfigure[Smoothed images at iteration $t=100$]{\includegraphics[width=6cm, height=4cm]{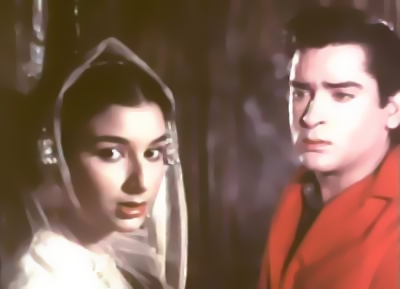}\includegraphics[width=3.75cm,  height=4cm]{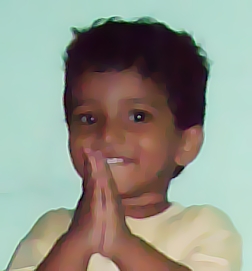} \includegraphics[width=5cm,  height=4cm]{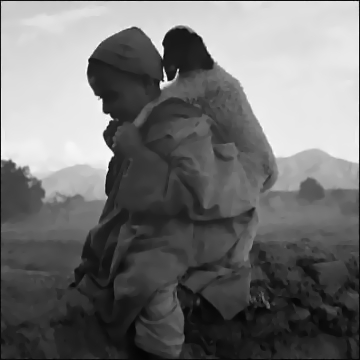}}
    \caption{Restoration by our adaptive regularization scheme on some real images with unknown noise strength. 
    (Top row) Original images 
    (Bottom row) Our adaptive regularization scheme results.}\label{I:kash}
\end{figure}
\begin{figure}
\centering
    \subfigure[Original]{\includegraphics[width=3.75cm]{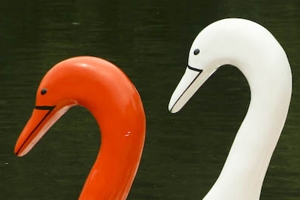}}
    \subfigure[Input]{\includegraphics[width=3.75cm]{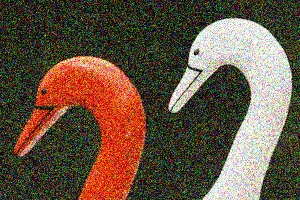}}
    \subfigure[Result]{\includegraphics[width=3.75cm]{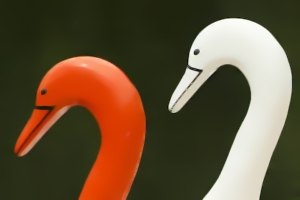}}
    \subfigure[Edges]{\includegraphics[width=3.75cm]{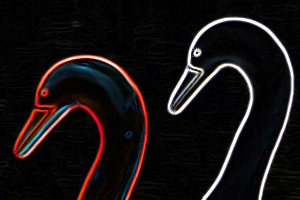}}
    \caption{$Ducks$ color image $300\times 200\times 3$ restoration result. 
    (a) Original RGB image
    (b) Noisy image, $\sigma_n = 20$, $PSNR = 12.56dB$ 
    (c) Restored by our method, $PSNR = 23.15dB$
    (d) Edges computed in all three channels (RGB) using the Canny edge detector.}\label{I:ducks}
\end{figure}

In Fig.~\ref{I:kash} we restore three real images, original color $Movie$ still (film grain noise, medium granularity), a $Kid$ image taken by a mobile camera picture ($2$ mega-pixels, image contains unknown amount of shot noise), and $Goat$ an old gray-scale photograph (noise type unknown) respectively. Note that for (RGB) color images we use the scheme~\eqref{E:regl2l1} for each of the channels red, green, and blue and combine the final restoration result. The restored results in Fig.~\ref{I:kash} (b) exhibit marked improvements. Note that fine texture details are lost in Fig.~\ref{I:kash} (b) (background wall, goat, hair and shirt), we may need to include further statistical information about textures in our scheme. Apart from this our scheme overall performs well and has strong edge preserving smoothing properties. The strong smoothing nature of our adaptive regularization~\eqref{E:ourm} can be seen in another piecewise smooth image shown in Fig.~\ref{I:ducks} (a). This $Ducks$ color image consists of flat background with strong curved edges and the result in Fig.~\ref{I:ducks} (c) indicates the local smoothing due to Gaussian filtering effect in regions where $\abs{\nabla u} <k$ and edge preserving TV filtering in other areas. Figure~\ref{I:ducks} (d) shows the Canny edge map of computed in all the three color channels\footnote{Using MATLAB command \texttt{edge($u_0$,'canny')}. Note that the Canny edge detector employs non-maximal suppression to avoid small scale edges. The edges are computed for each of Red, Green, Blue channels and the final result is shown by combining them.}. 

\subsection{Comparison results}\label{ssec:comp}

\begin{figure}
\centering
    \subfigure[True image $u$]{\includegraphics[width=3.8cm]{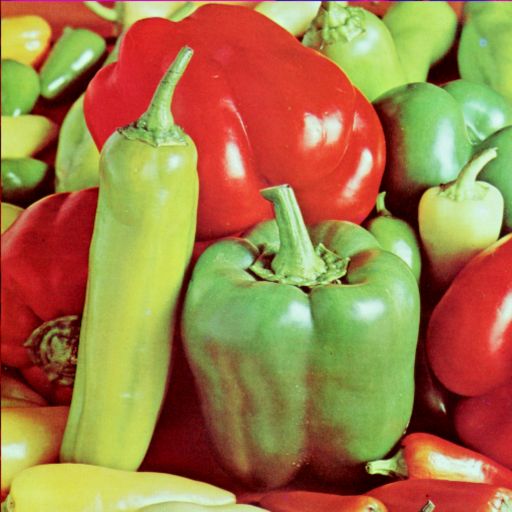}}
    \subfigure[Huber $U^H$]{\includegraphics[width=3.8cm]{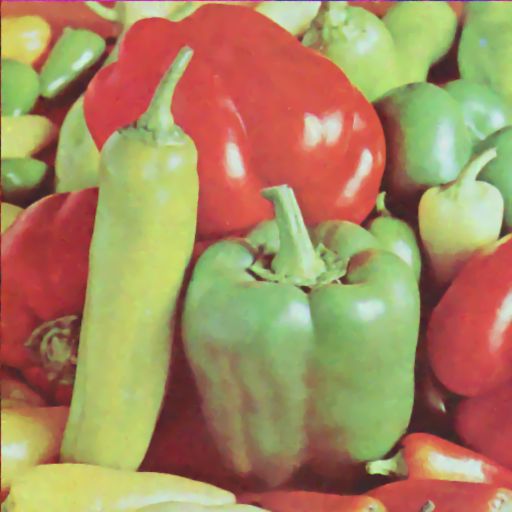}}
    \subfigure[Tukey $U^T$]{\includegraphics[width=3.8cm]{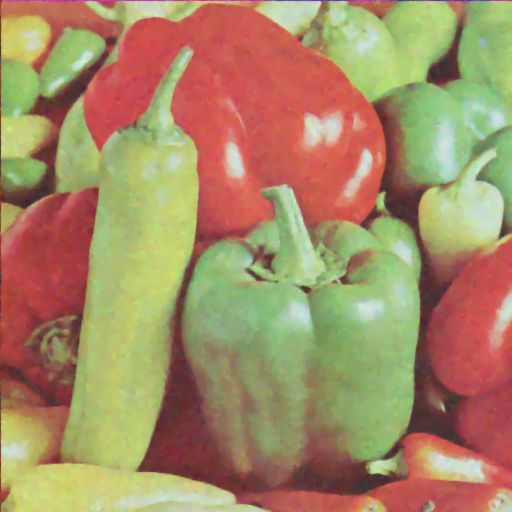}}
    \subfigure[Our $U^{our}$]{\includegraphics[width=3.8cm]{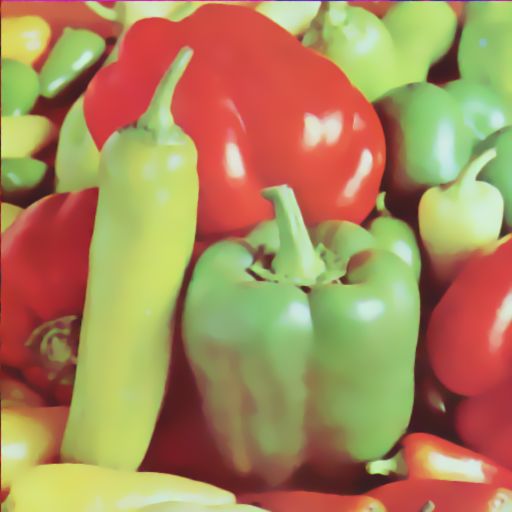}}
    \\
    \subfigure[Noisy image $I$]{\includegraphics[width=3.8cm]{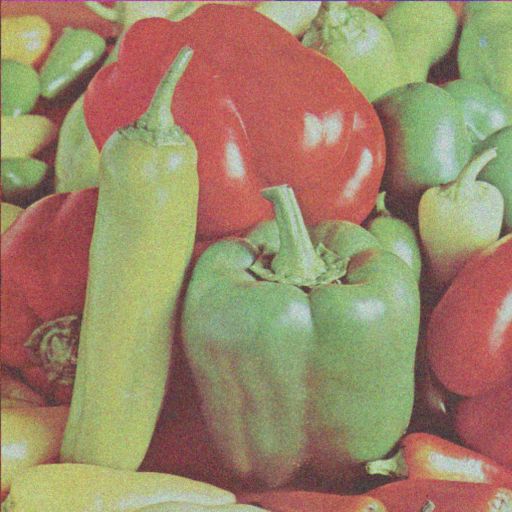}}
    \subfigure[]{\includegraphics[width=3.8cm]{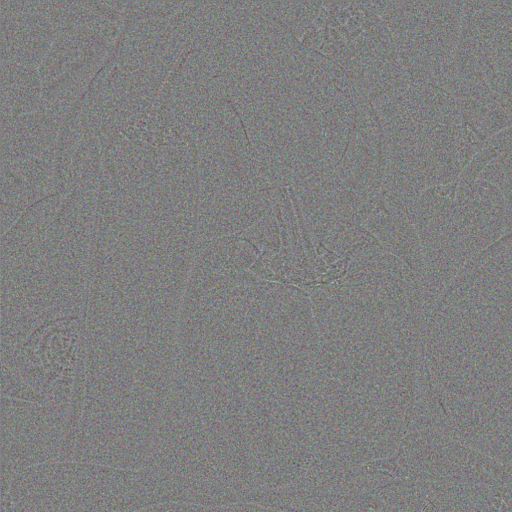}}
    \subfigure[]{\includegraphics[width=3.8cm]{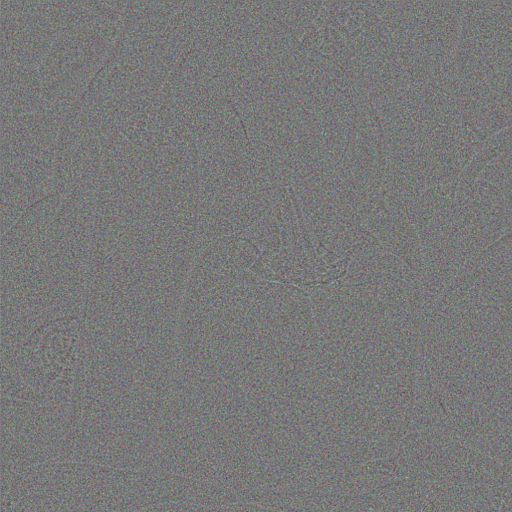}}
    \subfigure[]{\includegraphics[width=3.8cm]{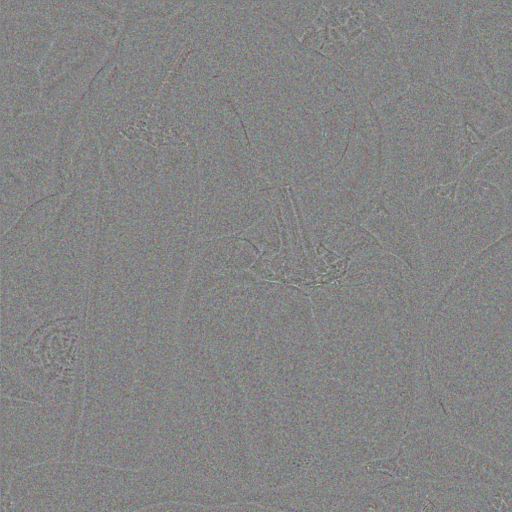}} 
    \subfigure[Noisy contour]{\includegraphics[width=3.8cm, height=3.8cm]{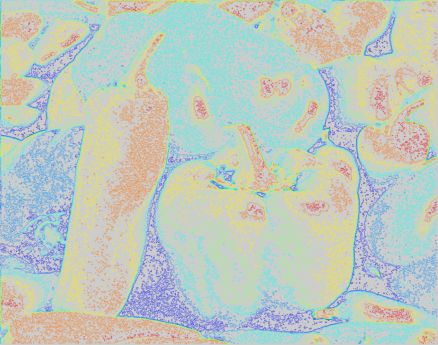}}
    \subfigure[]{\includegraphics[width=3.8cm, height=3.8cm]{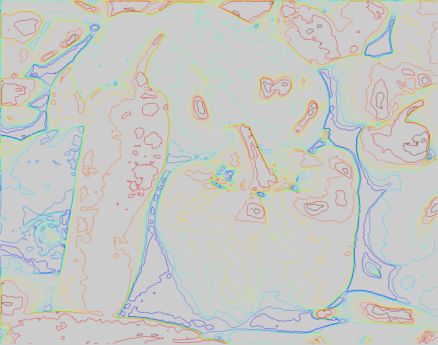}}
    \subfigure[]{\includegraphics[width=3.8cm, height=3.8cm]{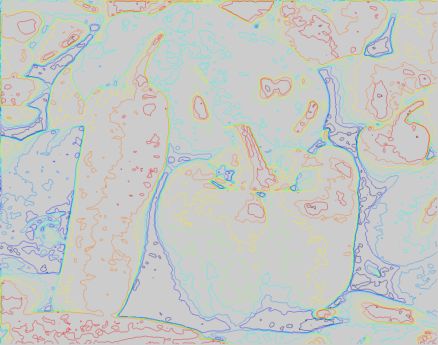}}
    \subfigure[]{\includegraphics[width=3.8cm, height=3.8cm]{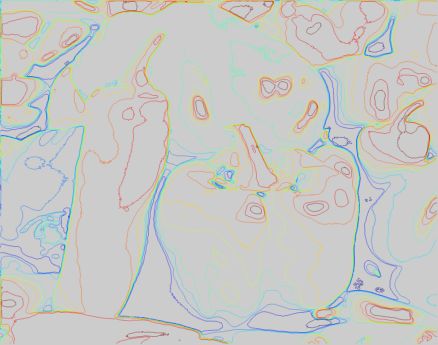}} 
    \caption{Restoration result for a color $Peppers$ color image (size $512\times 512\times 3)$ with our adaptive regularization scheme:
    (a) Original image 
    (b-d) results of Huber, Tukey and our adaptive regularization function based scheme with tolerance $tol=10^{-6}$ respectively
    (e) Gaussian noise ($\sigma_n =20$) corrupted image
    (f-h) Residual noise/method noise image, $\abs{u-U}^2$
    (i-j) Contour map showing the restoration on level lines.}\label{I:pepper}
\end{figure}
\begin{figure}
\centering
    \subfigure[Iteration Vs Mean Error]{\includegraphics[width=3.in, height=2.in]{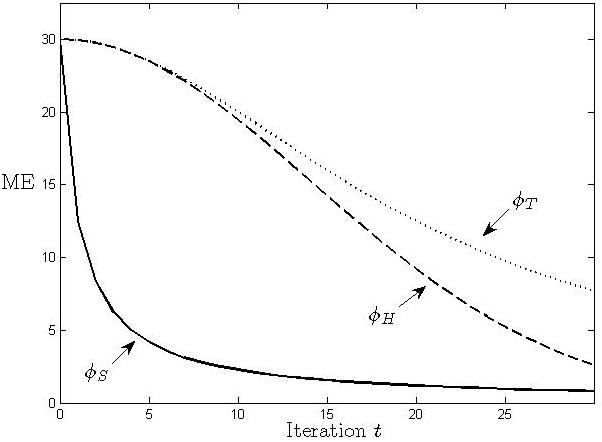}}
    \subfigure[Noise Vs Peak Signal-to-Noise Ratio]{\includegraphics[width=3.in, height=2.in]{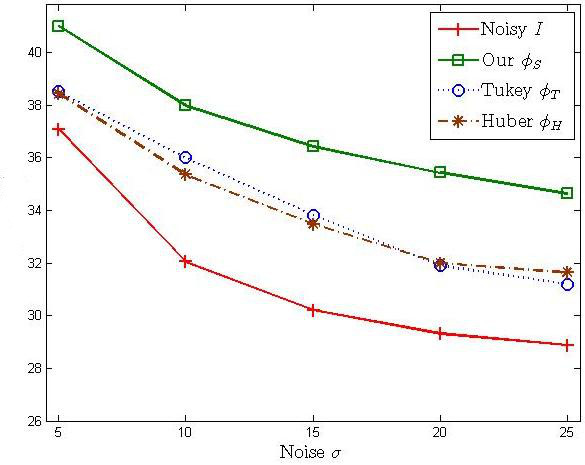}}
    
    \caption{Comparison of our proposed scheme with $\varphi_S$ in~(\ref{E:ourm}) with Huber's
    $\varphi_H$~\eqref{E:huber} and Tukey's $\varphi_{T}$:~\eqref{E:tukey} for the $Peppers$ color image in Figure~\ref{I:pepper}.
    (a) Number of iterations ($t$) Vs Mean error ($ME$)
    (b) Noise level ($\sigma_n$) Vs Peak signal-to-noise ratio ($PSNR$) for different noise levels.}\label{I:me}
\end{figure}

Figure~\ref{I:pepper} we show a comparison of restoration results for the $Peppers$ color image. As can be seen from the method noise and a contour maps our adaptive regularization scheme outperforms other schemes in terms of noise removal and edge preservation. The level lines are smoothed without reducing their edginess and flat regions are preserved without staircasing artifacts. Figure.~\ref{I:me} shows the ME and PSNR comparisons illustrating the versatility of our adaptive scheme~\eqref{E:regl2l1} with the proposed regularization function~\eqref{E:ourm} against other functions. Also note that the ME error curve for our method outperforms Huber and Tukey functions based regularization and quickly converges to a desired solution (usually $t=50$ is sufficient). On the other hand our function~(\ref{E:ourm}) is robust when compared to the other two classical functions as can be seen from the PSNR comparison Fig.~\ref{I:me} (b) as well. The topmost PSNR curve indicates that the scheme proposed in this paper surpasses the other two when the noise level increases $\sigma_n =10\to 25$. Note that $\sigma_n^2 > 400$ is a high level noise and our scheme~\eqref{E:regl2l1} does a good job in distinguishing between outliers correspond to noise and true edges due to the adaptive nature of $\lambda^{(t)}_j$~\eqref{E:lambda}.

We next provide comparison with primal dual hybrid gradient (PDHG)~\cite{ZhuChanPDHG08}, projected averaged gradient (Proj. Grad)~\cite{ZhuProjGrad10}, fast gradient projection (FGP)~\cite{BeckFGP09}, alternating direction method of multipliers (ADMM)~\cite{GlowinskiADMM75}, and split Bregman (Split Breg.) based schemes. The following error metrics are used to compare the convergence and performance of different algorithms for the discrete minimization Eqn.~\eqref{E:discf}.

\begin{itemize}
\item Relative duality gap:
\begin{eqnarray}\label{E:gap}
 \mathcal{R}(u,b) = \frac{E_{Primal}(u) - E_{Dual}(b)}{E_{Dual}(b)},
\end{eqnarray}
where $E_{Primal}$, $E_{Dual}$ represent the primal and dual objective functions respectively. This is used as a stopping criteria for the iterative schemes.

\item Peak Signal-to-Noise (PSNR) ratio,
\begin{eqnarray}\label{E:psnr}
\text{PSNR}=  20*\log10{\left(\frac{255}{\sqrt{\sum_{1\leq i,j \leq N} (u-u_0)^2}}\right)} (dB) 
\end{eqnarray}
The higher the PSNR the better the restoration result.

\item The mean error (ME):
\[ME(u,I) := \frac{1}{MN}\sum_{i} \abs{u_i - I_i}\]
The mean error needs to be small for restored images.
\end{itemize}

\begin{figure}
\centering
 	\subfigure[Tikhonov]{\includegraphics[width=3.25cm, height=3.25cm]{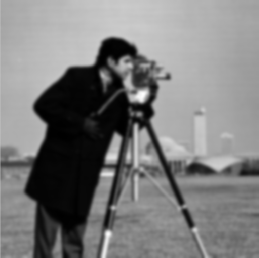}}
 	\subfigure[TV]{\includegraphics[width=3.25cm, height=3.25cm]{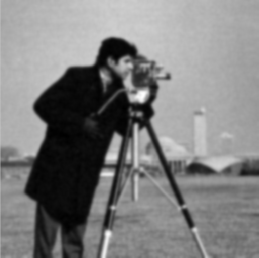}}
 	\subfigure[Our]{\includegraphics[width=3.25cm, height=3.25cm]{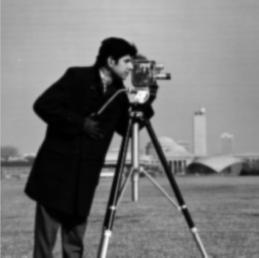}}
	 \subfigure[PSNR (dB) comparison]{\includegraphics[width=4.5cm, height=3.25cm]{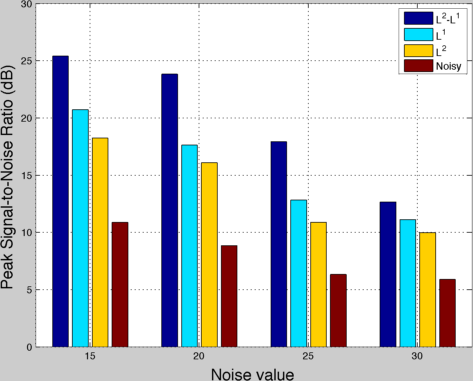}}
	\caption{Image restoration of noisy $Cameraman$ (Figure~\ref{fig:camoriginal} (b)) and PSNR (dB) comparison of results for the (a) Tikhonov ($\varphi(s) = s^2$), (b)  TV ($\varphi(s) = s$) and (c) our regularization function~\eqref{E:ourm} based schemes (d) PSNR comparison shows that the proposed adaptive scheme performs better across different noise levels.}\label{fig:cameraman}
\end{figure}
\begin{table}[ht]
\centering
    \begin{tabular}{l|lll}
    \hline
    Algorithm                    & tol $=10^{-2}$ 	& tol $=10^{-4}$ 	& tol $=10^{-6}$\\
    \hline
    PDHG			     & 14             & 70             & 310\\
    Proj. Grad.                  & 46             & 721            & 14996\\
    FGP                   	    & 24             & 179            & 1264\\
    ADMM                         & 97             & 270            & 569\\
    Split Breg.                  & \textbf{10}             & \textbf{28}            & \textbf{55}\\
    \hline
    \end{tabular}
    \caption{Comparison with primal dual hybrid gradient (PDHG), projected averaged gradient (Proj. Grad), fast gradient 		projection (FGP), alternating direction method of multipliers (ADMM), and split Bregman based scheme. 			Iterations required for denoising of the $Cameraman$ image ($256\times 256$, noise level $\sigma_n=20$) with 			different numerical schemes for the relative duality gap $\mathcal{R}(u,b)\leq tol$.}\label{t:gap}
\end{table}
\begin{table}[ht]
\centering
 \begin{tabular}{cccccc}
 	\hline
 Noise & PDHG		    	        & Proj. Grad. 		   & FGP			       &	ADMM		      & Split Breg.\\
 \hline
  15	   & 21.61 (28s, $100$)  & 21.58 (30s, $85$)  & 20.21 (20s, $70$) & 21.85 (24s, $73$) & \textbf{25.40} (10s, $55$)\\
  20	  &  20.46 (28s, $86$)  & 20.29 (30s, $80$)  & 20.12 (20s, $80$) & 20.05 (24s, $70$) & \textbf{23.82} (10s, $67$)\\
  25	   & 17.01 (28s, $75$)  & 16.88 (30s, $80$)  & 16.26 (20s, $75$) & 17.73 (24s, $70$) &\textbf{17.92} (10s, $65$)\\
  30	  &  10.77 (28s, $90$)  & 11.71 (30s, $90$)  & 11.93 (20s, $73$) & 11.05 (24s, $70$) &\textbf{12.67} (10s, $62$)\\
  	\hline
    \end{tabular}
    \caption{Comparison of different algorithms in terms of noise level ($\sigma_n$) for the $Cameraman$ gray scale image. The results are given in terms of best possible PSNR (computational time in seconds, maximum iterations). Each scheme is terminated if the maximum number of iterations exceeded $500$ or when the duality gap is less than $\mathcal{R}(u,b) \leq 10^{-6}$.}\label{t:iters}
\end{table}

First comparative example in Fig.~\ref{fig:cameraman} compares the restoration results for the noisy $Cameraman$ gray scale image from Fig.~\ref{fig:camoriginal} (b). As can be seen, adaptive Huber function performs better than the classical TV and Tikhonov schemes. Moreover, improvement in PSNR is $>5dB$ (see Fig.~\ref{fig:cameraman} (d)) in different noise levels which indicates the success of our scheme in terms of noise removal. Table~\ref{t:gap} shows the number of iterations taken by different optimization schemes for solving the discrete regularization scheme~\eqref{E:discf} with respect to the relative duality gap error~\eqref{E:gap} as a stopping criteria. The split Bregman based implementation outperforms all the other schemes by reducing the relative duality gap within very few iterations. Next, Table~\ref{t:iters} provides a comparison of PSNR (time in seconds, maximum iterations) for different noise levels and for different optimization schemes for the noisy $Cameraman$ image. The experiments were performed on a Mac Pro Laptop with 2.3GHz Intel Core i7 processor, 8Gb memory and MATLAB R2012a was used for visualizations. The split Bregman minimization outperforms all the related schemes in terms of PSNR (dB) as well as in timing as can be seen from the table. Similar analysis for the image deblurring and deconvolution requires a delicate analysis of the boundary conditions~\cite{ShiChangTVBCANM08} and is treated elsewhere. Other avenues of exploration are treating higher order models~\cite{WuYangPangfourthfixedpointANM12,JianghighshrinkageANM12}, multi grid~\cite{SpitaleriMarchmultigridANM01} and FEM~\cite{KacurMikulaFEManiso95} based schemes and their convergence analysis.
\section{Conclusion}\label{sec:conc}

In this paper we considered adaptive Huber type regularization function based image restoration scheme. By using discrete split Bregman scheme we proved the convergence to continuous formulation. Experimental results on real images are given to illustrate the results presented. Compared with other schemes the splitting based scheme provides faster convergence as well as good restoration results. The scheme can be extended to handle multispectral images by using inter-channel correlations~\cite{PSb10,Prasathi11,PrasathJCMPalSPL13} and this defines our future work.

\bibliographystyle{plain}
\bibliography{myrefs}
\end{document}